\RequirePackage[svgnames,table]{xcolor}

\documentclass[11pt,letterpaper,logo]{yalearxiv}

\usepackage{graphicx}
\usepackage{float,epstopdf}
\usepackage{bbm}

\usepackage{microtype}

\usepackage{natbib}
\setcitestyle{square}

\usepackage{subcaption}
\usepackage{booktabs}

\usepackage{amsmath}
\usepackage{amssymb}
\usepackage{mathtools}
\usepackage{amsthm}
\usepackage{dsfont}
\usepackage{multicol}
\usepackage{makecell}
\usepackage{multirow} 
\usepackage{amsfonts} 
\usepackage{mathrsfs}
\usepackage[amssymb, thickqspace]{SIunits}
\usepackage{enumitem}
\usepackage{pgfplotstable}
\usepackage{lipsum}		% Can be removed after putting your text content

\usepackage{microtype}
\usepackage{graphicx}
\usepackage{subcaption}
\usepackage{booktabs} % for professional tables
\usepackage[table]{xcolor}
\usepackage{arydshln}
\usepackage[normalem]{ulem} % [normalem] prevents the package from changing the default behavior of `\emph` to underline.

\usepackage{cases}
\usepackage{wrapfig}

\usepackage{url}

\usepackage{thmtools}
\usepackage{thm-restate}
\usepackage{tabu}

\definecolor{huskypurple}{HTML}{4B2E83}
\newcommand{\ourclr}{purple}
\usepackage{titletoc}
\makeatletter
\def\munderbar#1{\underline{\sbox\tw@{$#1$}\dp\tw@\z@\box\tw@}}
\makeatother

\newtheorem{definition}{Definition}[section]
\newtheorem{theorem}[definition]{Theorem}

\newtheorem{corollary}[definition]{Corollary}
\newtheorem{proposition}[definition]{Proposition}

\newtheorem{assumption}[definition]{Assumption}

\AddToHook{cmd/appendix/before}{%
  \setcounter{axiom}{0}%
}

\newcommand{\be}{\begin{equation}}
\newcommand{\ee}{\end{equation}}
\newcommand{\bea}{\begin{equation*}\begin{aligned}}
\newcommand{\eea}{\end{aligned}\end{equation*}}

\newcommand{\mc}{\mathcal}

\newcommand{\ve}[1]{\mathbf{#1}}
\newtcolorbox{simpleElegantQuote}{
    colback=AliceBlue!50!White,   
    colframe=RoyalBlue!75!Black,  
    boxrule=0.5pt,                
    arc=2mm,                     
    boxsep=4pt,
    left=10pt, right=10pt,        
    top=8pt, bottom=8pt,         
    fontupper=\itshape,          
}

\title{Make Each Token Count: Towards Improving Long-Context Performance with KV Cache Eviction}

\runningtitle{Make Each Token Count: Towards Improving Long-Context Performance with KV Cache Eviction}

\usepackage{fontawesome5}
\definecolor{myorange}{RGB}{250,99,57}
\newcommand{\orgfire}{\textcolor{myorange}{\faFire*}}
\newcommand{\trimkv}{TrimKV\textsuperscript{ \orgfire}}
\newcommand{\dbtrimkv}{DBTrimKV\textsuperscript{ \orgfire}}

\definecolor{yaleblue}{RGB}{0,58,112}
\definecolor{jpmcbrown}{RGB}{153,108,72}
\definecolor{cuhkviolet}{RGB}{117,15,109}
\newcommand{\yale}{%
  \textsuperscript{%
    {\usefont{T1}{pbk}{m}{n}\textcolor{yaleblue}{\textbf{Y}}}%
  }%
}
\newcommand{\cuhk}{%
  \textsuperscript{%
    {\usefont{T1}{pbk}{m}{n}\textcolor{cuhkviolet}{\textbf{C}}}%
  }%
}

\author{Ngoc Bui\yale, Hieu Trung Nguyen\cuhk, Arman Cohan\yale, Rex Ying\yale\\
\yale Department of Computer Science, Yale University,
\cuhk The Chinese University of Hong Kong\\
\faGithub~\textbf{Source Code:} \href{https://github.com/ngocbh/trimkv}{\texttt{https://github.com/ngocbh/trimkv}} 
}

\newcommand\DoToC{%
  \startcontents
  \printcontents{}{1}{\textbf{Table of Contents}\vskip3pt\hrule\vskip5pt}
  \vskip3pt\hrule\vskip5pt
}

\begin{document}

\begin{abstract}
\vspace{-1mm}
{\centering\section*{Abstract}}
The key-value (KV) cache is a major bottleneck in long-context inference, where memory and computation grow with sequence length. Existing KV eviction methods reduce this cost but typically degrade performance relative to full-cache inference. Our key insight is that full-cache attention is not always optimal: in long contexts, irrelevant tokens can dilute attention away from useful evidence, so selective, learnable eviction can improve generation rather than merely approximate the full cache. We introduce a global retention-based KV eviction method that learns each token's future utility under a unified memory budget. Lightweight retention gates assign utility scores to cached KV entries, and a shared final scoring projection calibrates these scores across all layers and heads. This enables a single global eviction policy in which tokens from different layers, heads, and modalities compete directly for cache capacity.
We further provide theoretical analysis showing that preferentially retaining useful tokens reduces attention dilution, and we justify geometric retention as a query-agnostic proxy for future utility. Across diverse long-context language and vision-language reasoning, and multi-turn dialogue benchmarks, our method substantially reduces KV memory while matching or surpassing full-cache inference. These results suggest that learned, globally calibrated KV eviction is not only a compression technique, but also a mechanism for improving long-context reasoning.
\end{abstract}

\maketitle

\section{Introduction}
\label{sec:intro}

The key-value (KV) cache is central to efficient autoregressive decoding in transformer-based language and vision--language models (LLMs and VLMs). By storing past keys and values, the model avoids recomputing representations for previous tokens during generation. However, this cache grows linearly with sequence length, and the attention computation over cached tokens grows with the amount of retained context. This becomes a major bottleneck in long-context and multimodal settings, where prompts may contain tens of thousands of text tokens or hundreds to thousands of visual tokens from images and videos~\citep{huang2025vision, sapkota2025vision, tu2024vl}. As context windows continue to expand, KV cache management has become one of the main challenges for practical long-context inference.

A common solution is KV eviction: once the cache exceeds a memory budget, the system removes tokens estimated to be unimportant. Existing methods typically view eviction as a compression problem, aiming to approximate full-cache inference while reducing memory and computation~\citep{li2024survey}. Many policies rely on heuristics such as recency, accumulated attention, or local attention magnitude~\citep{xiao2023efficient, li2024snapkv, zhang2023h2o, cai2025r}. While effective at reducing cost, these methods often degrade model quality relative to full-cache inference. This degradation is usually treated as unavoidable: removing context is assumed to trade accuracy for efficiency.

We revisit this assumption. Our key insight is that full-cache inference is not always ideal in long contexts. When many irrelevant or weakly relevant tokens remain in the cache, self-attention must normalize over all of them. As a result, useful evidence competes with a growing number of distractors, and attention mass can be diluted away from the tokens needed for prediction~\citep{bansal2026lets}. From this perspective, KV eviction is not merely an approximation to full-cache attention. If the right tokens are removed, eviction can suppress distractors, sharpen attention, and improve generation.

This perspective raises two central questions. First, \emph{how can a model identify which cached tokens will remain useful for future decoding?} Attention-based eviction heuristics are limited because attention scores are query-dependent and often reflect short-term relevance to the current prediction, rather than persistent utility across later decoding steps, subproblems, turns, or modalities. Second, \emph{how should a limited KV budget be allocated across layers and heads?} Different layers and heads may serve different roles: some preserve long-range information, while others mainly attend to local or short-lived context. A fixed per-layer or per-head budget can therefore misallocate memory. Existing methods often treat these two questions separately. Some focus on estimating token importance~\citep{xiao2023efficient, li2024snapkv, cai2025r, bui2025cache}, while others design budget-allocation rules across heads, layers, or modalities~\citep{feng2024ada, qin2025cake, shi2023adapyramid, tu2024vl}, but often relying on myopic attention statistics. 

\textbf{In this work}, we seek a simple unified solution based on learnable retention scores. Building on token retention~\citep{bui2025cache}, we use lightweight retention gates to predict a scalar future-utility score for each cached KV entry. These scores are trained under a memory constraint to capture whether a token is likely to remain useful for future decoding, rather than merely how much it is attended to at the current step. To make scores comparable across layers and heads, we tie the final scoring projection of all retention gates. This weight sharing calibrates retention scores onto a common scale.

With globally calibrated scores, KV eviction becomes a single ranking problem. Instead of imposing fixed budgets for each layer or head, we maintain one global KV budget and retain the entries with the highest predicted utility across all layers, heads, and modalities. This allows cache capacity to be allocated dynamically: layers and heads that preserve useful long-range information can receive more memory, while those dominated by low-utility or distracting tokens receive less. The resulting policy jointly performs token selection and budget allocation through the same learned retention score.

Empirically, we evaluate our method on long-context language and vision-language benchmarks. Across long-horizon reasoning, multi-turn dialogue, and multimodal understanding tasks, globally calibrated retention substantially reduces KV memory while matching or surpassing full-cache inference. In many cases, selective eviction improves accuracy over the full cache, supporting the view that removing low-utility tokens can improve reasoning rather than simply reduce cost.

Our contributions are summarized as follows:
\begin{itemize}[leftmargin=2em, itemsep=0pt, topsep=0pt, parsep=1pt, partopsep=1pt]
    \item We identify attention dilution as a mechanism by which full-cache inference can degrade in long contexts, and show theoretically that preferentially evicting distractors can improve attention quality. We justify geometric retention as a query-agnostic surrogate for future token utility, supported by empirical survival patterns of attended tokens in long-contexts.

    \item We introduce weight-tied retention gates that learn future-utility scores and calibrate them across layers and heads, enabling direct global comparison of cached KV entries. We propose a global retention-based KV eviction policy that jointly performs token selection and dynamic cache allocation under a single memory budget across layers, heads, and modalities.

    \item Through experiments on long-context language and vision-language benchmarks, we show that our method improves efficiency and can match or exceed full-cache performance while using substantially less KV memory.
\end{itemize}

\section{Preliminaries}

\subsection{Self-Attention and KV Eviction}

Consider autoregressive generation in a transformer with self-attention. At decoding step \(t\), the KV cache contains all previously generated tokens $C_t = \{1,\dots,t\}$, where each token \(i \in C_t\) is associated with a key-value pair \((\mathbf{k}_i,\mathbf{v}_i)\). Given the query \(\mathbf{q}_t\) at step \(t\), attention output is 
\[
z_{t,i} := \frac{\mathbf{q}_t^\top \mathbf{k}_i}{\sqrt d},
\qquad
\alpha_{t,i} := \frac{e^{z_{t,i}}}{\sum_{j \in C_t} e^{z_{t,j}}}, \qquad \mathbf{o}_t = \sum_{i \in C_t} \alpha_{t,i}\mathbf{v}_i.
\]

As decoding proceeds, the cache grows linearly with \(t\). Consequently, KV memory scales with context length, while the cumulative attention cost over generation scales quadratically~\citep{keles2023computational}.

A standard approach for improving memory and computation is to restrict the cache to at most \(M\) tokens by evicting less important key-value pairs. Under such an eviction policy, the attention becomes 
\begin{equation}\label{eq:reform}
\alpha^r_{t,i}
=
\frac{r_{t,i} e^{z_{t,i}}}
{\sum_{j \in C_t} r_{t,j} e^{z_{t,j}}},
\qquad
\mathbf{o}'_t = \sum_{i \in C_t} \alpha^r_{t,i}\mathbf{v}_i,
\qquad
r_{t,i}\in\{0,1\},
\qquad
r_{t,i}\ge r_{t+1,i}.
\end{equation}
Here, \(r_{t,i}\) indicates whether token \(i\) is retained in the cache at step \(t\). The monotonicity constraint ensures that once a token is evicted, it cannot re-enter the cache later. The ideal eviction policy solves
\begin{equation}\label{eq:problem}
\min_{r}\ \mathcal{L}(\mathbf{o}'_t;\mathbf{o}_t)
\qquad \text{s.t.} \qquad
\sum_{i=1}^t r_{t,i}\le M.
\end{equation}
That is, we seek a size-\(M\) cache whose attention output remains as close as possible to the full-attention output. Solving this combinatorial problem exactly at every decoding step is infeasible, so most prior work relies on heuristic eviction rules~\citep{xiao2023efficient, han2023lm, zhang2023h2o, li2024snapkv, cai2025r, ghadia2025dialogue}.

\subsection{Token Retention as a Learnable Surrogate}\label{sec:retention}

\citet{bui2025cache} relax the discrete variables \(r_{t,i}\in\{0,1\}\) into continuous retention factors by assuming that each token has an intrinsic importance that decays exponentially over time. Under this relaxation,
\[
\tilde{\alpha}_{t,i}
=
\frac{\beta_i^{\,t-i} e^{z_{t,i}}}
{\sum_{j\in C_t} \beta_j^{\,t-j} e^{z_{t,j}}},
\qquad
\tilde{\mathbf{o}}_t
=
\sum_{i\in C_t}\tilde{\alpha}_{t,i}\mathbf{v}_i,
\]
where \(\beta_i\in[0,1]\) is a learnable retention score for token \(i\). Larger \(\beta_i\) corresponds to greater long-term importance, while \(\beta_i=1\) recovers standard attention. The score \(\beta_i\) is predicted from the token embedding by a small \emph{retention gate}. These gates are trained so that a student model with retention-gated attention matches a full-cache teacher under a memory budget. Let \(p(\cdot\mid x)\) denote the teacher distribution and \(q_\theta(\cdot\mid x)\) the student distribution with gate parameters \(\theta\). The quality loss is
\begin{equation}
\mathcal{L}_{\text{quality}}
=
D_{\mathrm{KL}}(p \,\|\, q_\theta)
+
\mathbb{E}_{(x,y)}[-\log q_\theta(y\mid x)].
\end{equation}
To enforce the capacity constraint, they introduce
\begin{equation}
\mathcal{L}_{\text{cap}}
= \sum_{\ell,h} \sum_{t=1}^{T}
\max\!\left(
0,\,
\sum_{i=1}^{t}\beta_{\ell,h,i}^{\,t-i}
-
M
\right),
\end{equation}
where \(M\) is the KV budget for each attention head. The overall objective is
\begin{equation}
\mathcal{L}
=
\mathcal{L}_{\text{quality}}
+
\lambda \mathcal{L}_{\text{cap}}.
\end{equation}

\section{Can KV Eviction Improve Long-Context Performance?}\label{sec:bey_eff}
In this section, we explain why KV eviction can improve long-context performance through \emph{attention dilution}: full-cache attention spreads mass over many irrelevant tokens, while selective eviction suppresses distractors and concentrates attention on useful context. We further interpret geometric retention as a query-agnostic surrogate for future token utility.

\subsection{Attention Dilution as Loss of Useful Mass}\label{sec:dilution}

\begin{wrapfigure}{r}{0.48\textwidth}
    \vspace{-7mm}
    \centering
    \begin{subfigure}[b]{0.50\linewidth}
        \centering
        \includegraphics[width=\linewidth]{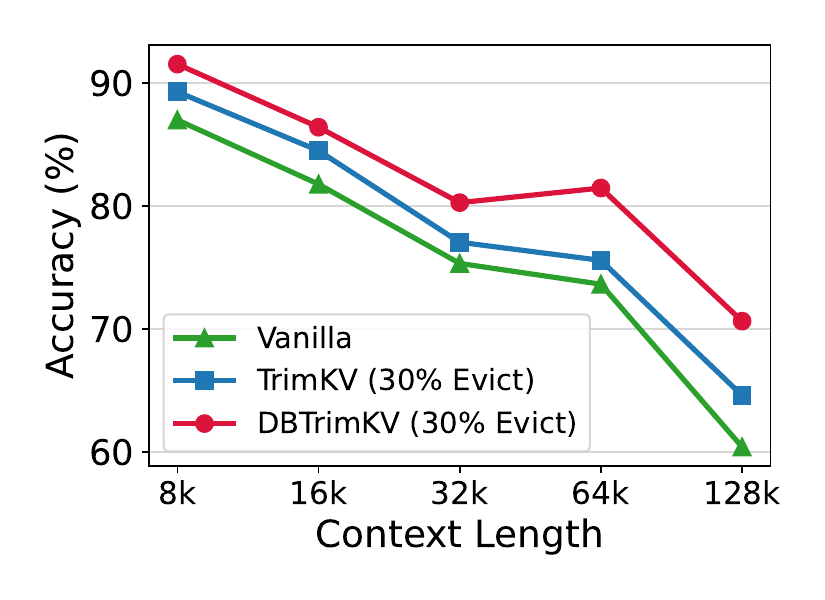}
        \caption{Varying Context Length} 
        \label{fig:niah_context_vs_performance} 
    \end{subfigure}\hfill
    \begin{subfigure}[b]{0.48\linewidth}
        \centering
        \includegraphics[width=\linewidth]{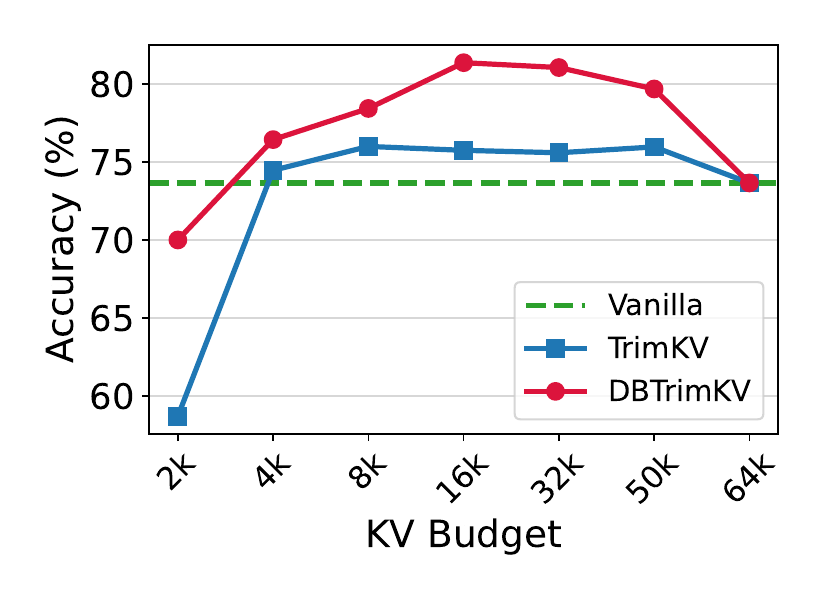}
        \caption{Varying KV Budget}
        \label{fig:niah_kv_budget_vs_performance} 
    \end{subfigure}
    \caption{\textbf{a)}: Multi-key, -value, -needle NIAH accuracy across increasing context lengths. \textbf{b)}: Accuracy at 64k context and varying budgets.} 
    \label{fig:dilution}
    \vspace{-6mm}
\end{wrapfigure}

Long-context failures can arise even when the relevant evidence is present in the cache~\citep{liu2024lost, yang2025llm} (see Figure~\ref{fig:niah_context_vs_performance}). At a decoding step \(t\), only a small subset \(U_t\subseteq C_t\) may be useful for the next prediction, while the remaining tokens act as distractors~\citep{deng2024sparse}. Since self-attention normalizes over the entire cache, useful tokens must compete with all distractors in the softmax denominator. As the number of distractors grows, the total attention mass assigned to useful tokens can vanish.

To make this precise, define the oracle sparse attention that keeps the same logits on
useful tokens but removes all distractors:
\begin{equation}
\alpha^\star_{t,i}
=
\frac{\mathbf{1}\{i\in U_t\}e^{z_{t,i}}}
{\sum_{j\in U_t}e^{z_{t,j}}}.
\end{equation} 
This oracle is motivated by the empirical observation that LLMs often perform well in short-context settings but degrade as the context length increases and irrelevant tokens compete for attention~\citep{bansal2026lets}.

We define the \emph{attention dilution} at step \(t\) as the fraction of attention mass assigned to distractors:
\begin{equation}\label{eq:dilution}
\delta_t
:=
1-\sum_{i\in U_t}\alpha_{t,i}.
\end{equation}
Equivalently, \(\delta_t=\mathrm{TV}(\alpha_t^\star,\alpha_t)\). Thus, the dilution quantity is exactly the distance between full-cache attention and the oracle distribution that attends only to useful tokens.

\paragraph{Near-tie distractors force attention dilution.} The next result shows that severe dilution is unavoidable when many distractors have logits close to those of the useful tokens.

\begin{proposition}[Near-tie distractors force dilution]
\label{prop:tv_dilution}
Fix a decoding step \(t\). Suppose there exist \(\Delta\ge 0\) and a subset \(D'_t\subseteq C_t\setminus U_t\) such that
$z_{t,d}\ge \max_{i\in U_t} z_{t,i}-\Delta$ for all $d\in D'_t$.
Then
\[
\delta_t \ge
\frac{e^{-\Delta}|D'_t|/|U_t|}
{1+e^{-\Delta}|D'_t|/|U_t|}.
\]
Consequently, if \(\Delta=O(1)\), \(|U_t|=O(1)\), and \(|D'_t|\to\infty\), then $\delta_t \to 1$ and  $\sum_{i\in U_t}\alpha_{t,i}\to 0$.
\end{proposition}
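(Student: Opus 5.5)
Let $m:=\max_{i\in U_t} z_{t,i}$. I will write $\delta_t$ as a ratio of distractor mass to total mass and then bound each piece using the near-tie hypothesis.

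Set $S_U:=\sum_{i\in U_t}e^{z_{t,i}}$ and $S_D:=\sum_{j\in C_t\setminus U_t}e^{z_{t,j}}$. By the definition of $\alpha_{t,i}$ we have $\sum_{i\in U_t}\alpha_{t,i}=S_U/(S_U+S_D)$, so
\[
\delta_t=\frac{S_D}{S_U+S_D}=\frac{S_D/S_U}{1+S_D/S_U}.
\]
The map $x\mapsto x/(1+x)$ is increasing on $[0,\infty)$. It therefore suffices to prove the lower bound $S_D/S_U\ge e^{-\Delta}|D'_t|/|U_t|$.

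For the numerator, I drop the nonnegative terms outside $D'_t$ and apply the hypothesis termwise:
\[
S_D\ge\sum_{d\in D'_t}e^{z_{t,d}}\ge |D'_t|\,e^{m-\Delta}.
\]
For the denominator, every useful logit is at most $m$, so $S_U\le |U_t|\,e^{m}$. Dividing gives
\[
\frac{S_D}{S_U}\ge \frac{e^{-\Delta}|D'_t|}{|U_t|},
\]
and monotonicity of $x/(1+x)$ yields the claimed bound. The case $U_t=\emptyset$ is vacuous or excluded, since the bound requires $|U_t|\ge 1$.

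For the asymptotic consequence, suppose $\Delta\le c_1$ and $|U_t|\le c_2$. Then
\[
x:=\frac{e^{-\Delta}|D'_t|}{|U_t|}\ge \frac{e^{-c_1}|D'_t|}{c_2}\to\infty
\quad\text{as } |D'_t|\to\infty,
\]
so $x/(1+x)\to1$. Since $\delta_t\le1$ always, $\delta_t\to1$, and hence $\sum_{i\in U_t}\alpha_{t,i}=1-\delta_t\to0$.

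There is no real obstacle here. The only points needing care are the direction of each inequality (a lower bound on distractor mass, an upper bound on useful mass) and the monotonicity step that carries the ratio bound over to $\delta_t$.
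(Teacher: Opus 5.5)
Your proof is correct and takes essentially the same route as the paper. Both bound the useful mass above by $|U_t|e^{m}$ and the distractor mass below by $|D'_t|e^{m-\Delta}$. The paper substitutes these directly into $\sum_{i\in U_t}\alpha_{t,i}=S_U/(S_U+S_D)$, whereas you pass through the ratio $S_D/S_U$ and the monotonicity of $x\mapsto x/(1+x)$; this is only a cosmetic difference.
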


The proof is given in Appendix~\ref{sec:dilut_proof}. The proof is given in Appendix~\ref{sec:dilut_proof}. This proposition shows that attention dilution can arise from the cumulative effect of many \emph{competitive} distractors. Although each distractor may receive only a small amount of attention individually, their combined contribution to the softmax denominator can absorb a substantial fraction of the total attention mass, hence diluting the information carried by useful tokens.

\paragraph{Eviction mitigates dilution.}
We show that selective eviction can mitigate the dilution effect defined in Eq.~\eqref{eq:dilution}.  Let \(r_{t,i}\in[0,1]\) be a general retention weight and $\alpha^{r}_{t, i}$ be the retention-gated attention in Eq~\eqref{eq:reform}. Here, hard eviction is the special case \(r_{t,i}\in\{0,1\}\) and geometric retention is \(r_{t,i}=\beta_i^{t-i}\). 

\begin{corollary}[Preferential retention reduces dilution]
\label{prop:retention_reduces_dilution}
For any decoding step \(t\), we have
\[
\delta^r_t
=
\frac{(\rho_D/\rho_U)\delta_t}
{(1-\delta_t)+ (\rho_D/\rho_U)\delta_t}, \qquad \text{where} \quad \rho_U
=
\frac{\sum_{i\in U_t}r_{t,i}e^{z_{t,i}}}
{\sum_{i\in U_t}e^{z_{t,i}}},
\quad
\rho_D
=
\frac{\sum_{i\notin U_t}r_{t,i}e^{z_{t,i}}}
{\sum_{i\notin U_t}e^{z_{t,i}}}.
\]
Consequently, if \(\rho_D\le \rho_U\), then $\delta^r_t \le \delta_t$. If the ratio \(\rho_D/\rho_U \to 0\), then $\delta^r \to 0$.
\end{corollary}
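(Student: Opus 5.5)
The identity is pure algebra on the softmax partition sums, so the plan is to introduce the two unnormalized masses and rewrite everything in terms of them. Set $S_U := \sum_{i\in U_t} e^{z_{t,i}}$ and $S_D := \sum_{i\notin U_t} e^{z_{t,i}}$. Then the full-cache dilution from Eq.~\eqref{eq:dilution} is $\delta_t = S_D/(S_U+S_D)$, so $1-\delta_t = S_U/(S_U+S_D)$. By the definitions of $\rho_U,\rho_D$ in the statement, the retained masses are $\sum_{i\in U_t} r_{t,i}e^{z_{t,i}} = \rho_U S_U$ and $\sum_{i\notin U_t} r_{t,i}e^{z_{t,i}} = \rho_D S_D$. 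From Eq.~\eqref{eq:reform}, the retention-gated dilution $\delta^r_t := 1-\sum_{i\in U_t}\alpha^r_{t,i}$ therefore equals $\rho_D S_D/(\rho_U S_U + \rho_D S_D)$.

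Next, divide the numerator and denominator by $\rho_U(S_U+S_D)$. This gives
\[
\delta^r_t=\frac{(\rho_D/\rho_U)\,\delta_t}{(1-\delta_t)+(\rho_D/\rho_U)\,\delta_t},
\]
which is the claimed identity. This step needs $\rho_U>0$, meaning at least one useful token keeps positive weight. I would state that as the standing nondegeneracy assumption; otherwise both $\delta^r_t$ and the formula are undefined or trivial. We also need $S_D>0$ for $\rho_D$ to be defined. If there are no distractors, then $\delta_t=\delta^r_t=0$ and we can adopt the convention $\rho_D=0$.

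For the consequences, write $\kappa:=\rho_D/\rho_U\ge 0$ and $f(\kappa)=\kappa\delta/((1-\delta)+\kappa\delta)$ with $\delta=\delta_t\in[0,1)$ fixed. Then $f$ is nondecreasing in $\kappa$, since $f = 1 - (1-\delta)/((1-\delta)+\kappa\delta)$. We also have $f(1)=\delta_t$. Hence $\rho_D\le\rho_U$ gives $\delta^r_t\le\delta_t$. As $\kappa\to 0$ with $\delta_t<1$, the denominator tends to $1-\delta_t>0$ and the numerator tends to $0$, so $\delta^r_t\to 0$.

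No step here is a real obstacle. The only care needed is the degenerate cases: $\rho_U=0$, $S_D=0$, and $\delta_t=1$. The case $\delta_t=1$ cannot occur when $U_t$ is nonempty, because exponentials are positive.
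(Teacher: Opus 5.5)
Your proposal is correct and follows essentially the same route as the paper. Both arguments write $\delta^r_t=\rho_D S_D/(\rho_U S_U+\rho_D S_D)$ and rescale; you divide by $\rho_U(S_U+S_D)$, whereas the paper divides by $\rho_U S_U$ and substitutes $S_D/S_U=\delta_t/(1-\delta_t)$. Both then use monotonicity of $\kappa\mapsto \kappa\delta_t/((1-\delta_t)+\kappa\delta_t)$ and the limit $\kappa\to 0$ with $1-\delta_t>0$; your explicit handling of the degenerate cases ($\rho_U=0$, no distractors) is a small addition the paper leaves implicit.
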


Corollary~\ref{prop:retention_reduces_dilution} provides an intuition for why eviction can improve long-context behavior. Its condition, \(\rho_U \ge \rho_D\), only requires useful tokens to be retained at a higher logit-weighted average rate than distractors. Thus, any retention rule that suppresses distractor mass more than useful-token mass reduces distractor-induced dilution. From this perspective, KV eviction is not merely an efficient approximation to full-cache attention: when irrelevant context dilutes attention, selective eviction can serve as a corrective mechanism. The empirical results for learnable eviction methods in Figure~\ref{fig:dilution} are consistent with this prediction, as the best-performing model is not the full-cache model.

\subsection{Geometric Retention as Query-Agnostic Future Utility}
\label{sec:geom_retention}

We now justify the geometric retention form \(r_{t,i}=\beta_i^{\,t-i}\) used by retention-gated attention. The key intuition is that token importance is both sparse and local.  Many tokens are important for the current query or nearby queries, but their utility fades quickly once the generation moves to a different subproblem, entity, or topic.  Other tokens remain useful for much longer.  Thus, the quantity we want for KV eviction is not only the current attention score of token \(i\), but its future persistence: how likely it is to keep receiving non-negligible attention as decoding continues.

At decoding step \(t\), we view compression as a one-shot decision over the old cache \(C_t=\{1,\dots,t\}\). Future tokens are not part of this decision and can be handled by later compression rounds. For an old token \(i\le t\), define its cumulative future utility as
\begin{equation}
\label{eq:future_utility}
\bar G_i(t)
:=
\sum_{s=t+1}^T
w_{t,s}
\Pr(i\in U_s^{(t)} \mid \mathcal F_t),
\end{equation}
where \(w_{t,s}\ge 0\) are horizon weights and \(U_s^{(t)}\subseteq C_t\) denotes the old cached tokens that remain useful at future step \(s\). Here, $\mc F_t$ summarizes information available at step $t$.

To reason about this probability, consider a fixed attention head and approximate future query--key compatibility in a low-dimensional query state:
\[
z_{s,i}
=
\mathbf x_s^\top(\mathbf W_Q^\top \mathbf W_K)\mathbf x_i
\approx
\mathbf r_s^\top \mathbf c_i,
\qquad
\mathbf r_s,\mathbf c_i\in\mathbb R^m,\quad m\ll d .
\]
Here \(\mathbf r_s\) is the future query state and \(\mathbf c_i\) is a fixed compatibility vector for token \(i\). For a query state \(\mathbf r\), let
\[
\Gamma_{K,t}(\mathbf r)
:=
K\text{-th largest value in }
\{\mathbf r^\top \mathbf c_j:j\le t\}
\]
be the old-cache top-\(K\) threshold. Token \(i\) is immediately useful at step $s$ if \(\mathbf r^\top \mathbf c_i\ge \Gamma_{K,t}(\mathbf r)\).

Immediate top-\(K\) membership, however, is too strict for retention. A token may temporarily fall below the top-\(K\) boundary but may become useful again when generation returns to the same entity, instruction, document, or topic. We therefore define a relaxed survival region
\[
\mathcal S_{i,t}^{(K)}
=
\left\{
\mathbf r:
\mathbf r^\top \mathbf c_i
\ge
\Gamma_{K,t}(\mathbf r)-\Delta_{i,t}(\mathbf r)
\right\},
\qquad
\Delta_{i,t}(\mathbf r)\ge 0 .
\]
The slack \(\Delta_{i,t}(\mathbf r)\) allows token \(i\) to remain retention-worthy even when it is not currently in the hard top-\(K\). It is token dependent: local tokens may admit only a small slack, whereas globally useful tokens, such as topic-summary tokens~\citep{mu2023learning}, delimiter tokens, or other persistent structural tokens, may tolerate a larger drop below the top-\(K\) boundary. For a fixed compression time \(t\) and a future step \(s>t\), define
\[
U_s^{(t)}
:=
\left\{
i\le t:
\mathbf r_u\in\mathcal S_{i,t}^{(K)}
\ \text{for all }u=t+1,\dots,s
\right\}.
\]

This definition separates immediate usefulness from future retention-worthiness. The hard top-\(K\) region captures tokens that are among the strongest competitors for the current query state, while the relaxed region captures tokens that remain relevant to become useful again. Since KV eviction is monotone, a removed token cannot re-enter the cache; retention should therefore estimate whether future query states are likely to remain inside this survival region.

The following result justifies the geometric decay of the retention under these definitions.

\begin{theorem}[Geometric decay of retention (Informal)]
\label{thm:geom_relaxed_topk_informal}
Assume that, within an attention head, future query states $(\ve r_s)_{s > t}$ evolve according to stable dynamics. If, whenever the query state is inside token \(i\)'s relaxed top-\(K\) region \(\mathcal S_{i,t}^{(K)}\), there is probability at least \(\epsilon_i>0\) of exiting this region within the next \(b_i\) decoding steps, then there exists $A_i(t)$ such that
\[
\bar G_i(t)
\le
A_i(t)
\sum_{s=t+1}^T
w_{t,s}
\beta_i^{\,s-t}, \qquad
\beta_i=(1-\epsilon_i)^{1/b_i}\in(0,1).
\]
\end{theorem}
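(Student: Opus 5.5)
The plan is to reduce the bound on $\bar G_i(t)$ to a survival-probability estimate for the query process staying inside the relaxed region $\mathcal S_{i,t}^{(K)}$. By the definition of $U_s^{(t)}$, the event $\{i\in U_s^{(t)}\}$ is exactly the event that $\mathbf r_u\in\mathcal S_{i,t}^{(K)}$ for every $u=t+1,\dots,s$. Since the region is fixed once $t$ is fixed (the threshold $\Gamma_{K,t}$ and slack $\Delta_{i,t}$ depend only on the old cache), this is a first-exit event $\{\tau_i>s\}$. Here
\[
\tau_i:=\min\{u>t:\mathbf r_u\notin\mathcal S_{i,t}^{(K)}\}.
\]
Hence $\Pr(i\in U_s^{(t)}\mid\mathcal F_t)=\Pr(\tau_i>s\mid\mathcal F_t)$. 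It then suffices to show
\[
\Pr(\tau_i>s\mid\mathcal F_t)\le A_i(t)\,\beta_i^{\,s-t}.
\]
Multiplying by the nonnegative weights $w_{t,s}$ and summing gives the stated inequality.

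The survival bound comes from a block argument. Formalize the hypothesis through the stable (Markov) dynamics of $(\mathbf r_s)$: for every $u\ge t$ and every history in $\mathcal F_u$ with $\mathbf r_u\in\mathcal S_{i,t}^{(K)}$, the conditional probability of leaving the region within the next $b_i$ steps is at least $\epsilon_i$. Split $(t,s]$ into $n=\lfloor (s-t)/b_i\rfloor$ consecutive blocks of length $b_i$. Survival to $s$ requires surviving every block. Conditioning successively on $\mathcal F_{t+kb_i}$ and using the tower property, each block contributes a factor at most $1-\epsilon_i$. This gives
\[
\Pr(\tau_i>s\mid\mathcal F_t)\le(1-\epsilon_i)^{n}.
\]
Since $n\ge (s-t)/b_i-1$, we get $(1-\epsilon_i)^n\le(1-\epsilon_i)^{-1}\beta_i^{\,s-t}$ with $\beta_i=(1-\epsilon_i)^{1/b_i}$. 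We also need the probability of starting inside the region at all. So we take
\[
A_i(t)=(1-\epsilon_i)^{-1}\Pr(\mathbf r_{t+1}\in\mathcal S_{i,t}^{(K)}\mid\mathcal F_t),
\]
or simply $(1-\epsilon_i)^{-1}$. This constant is independent of $s$, as required.

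The main obstacle is making the informal hypothesis precise enough for the block recursion to go through. The exit probability $\epsilon_i$ must hold uniformly over all histories (or all states in the region) under the conditional law given $\mathcal F_u$, not merely on average. This is where the stable-dynamics assumption enters. I would first state it as a Markov (or Doeblin-type) condition on $\mathbf r_s$ with a uniform lower bound on the $b_i$-step exit probability from $\mathcal S_{i,t}^{(K)}$, and then verify that the strong Markov / tower-property step is legitimate at the deterministic times $t+kb_i$. A secondary subtlety is that the region must not move with $s$. This holds because it is defined relative to the old cache $C_t$, which is exactly why the paper frames compression as a one-shot decision at time $t$.
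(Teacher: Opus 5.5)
Your proposal is correct and follows essentially the same route as the paper: the paper also reduces $\Pr(i\in U_{t+n}^{(t)}\mid\mathcal F_t)$ to the event that the query state stays in $\mathcal S_{i,t}^{(K)}$ for $n$ consecutive steps, applies the Markov property at the start of each of the $\lfloor n/b_i\rfloor$ blocks together with the block-exit assumption to get $(1-\epsilon_i)^{\lfloor n/b_i\rfloor}$, and uses $\lfloor n/b_i\rfloor\ge n/b_i-1$ to obtain $A_i=(1-\epsilon_i)^{-1}$ and $\beta_i=(1-\epsilon_i)^{1/b_i}$. One small fix, which the paper also glosses over: the first block's starting state $\mathbf r_t$ need not lie in $\mathcal S_{i,t}^{(K)}$, so the exit assumption does not apply to that block; start the blocks at $t+1$ instead, after conditioning on $\mathbf r_{t+1}\in\mathcal S_{i,t}^{(K)}$, which gives $(1-\epsilon_i)^{\lfloor (s-t-1)/b_i\rfloor}\le(1-\epsilon_i)^{-1}\beta_i^{\,s-t}$ and justifies exactly your constant $(1-\epsilon_i)^{-1}\Pr(\mathbf r_{t+1}\in\mathcal S_{i,t}^{(K)}\mid\mathcal F_t)$.
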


The parameter \(\beta_i\) therefore summarizes the persistence of token \(i\): short-lived local tokens have small \(\beta_i\), while globally useful or structural tokens have \(\beta_i\) close to one. We give the formal statement and proof in Appendix~\ref{app:geom_retention}.

\begin{wrapfigure}{r}{0.53\textwidth}
    \vspace{-5mm}
    \centering
    \begin{subfigure}[b]{0.49\linewidth}
        \centering
        \includegraphics[width=\linewidth]{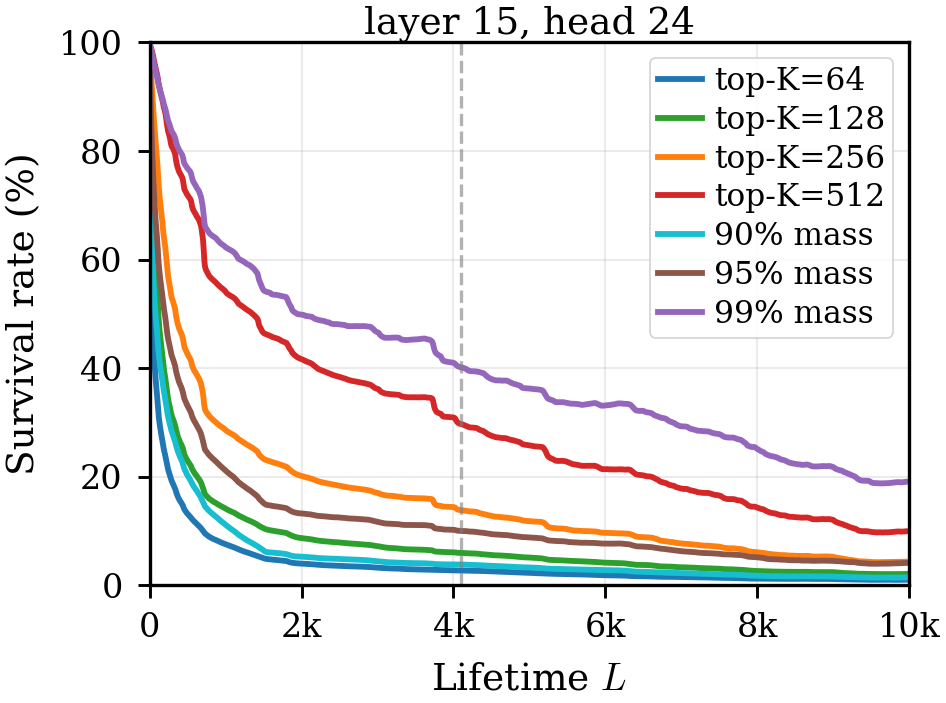}
        \caption{Survival curves}
        \label{fig:survival_curves}
        \vspace{-2mm}
    \end{subfigure}
    \hfill
    \begin{subfigure}[b]{0.49\linewidth}
        \centering
        \includegraphics[width=\linewidth]{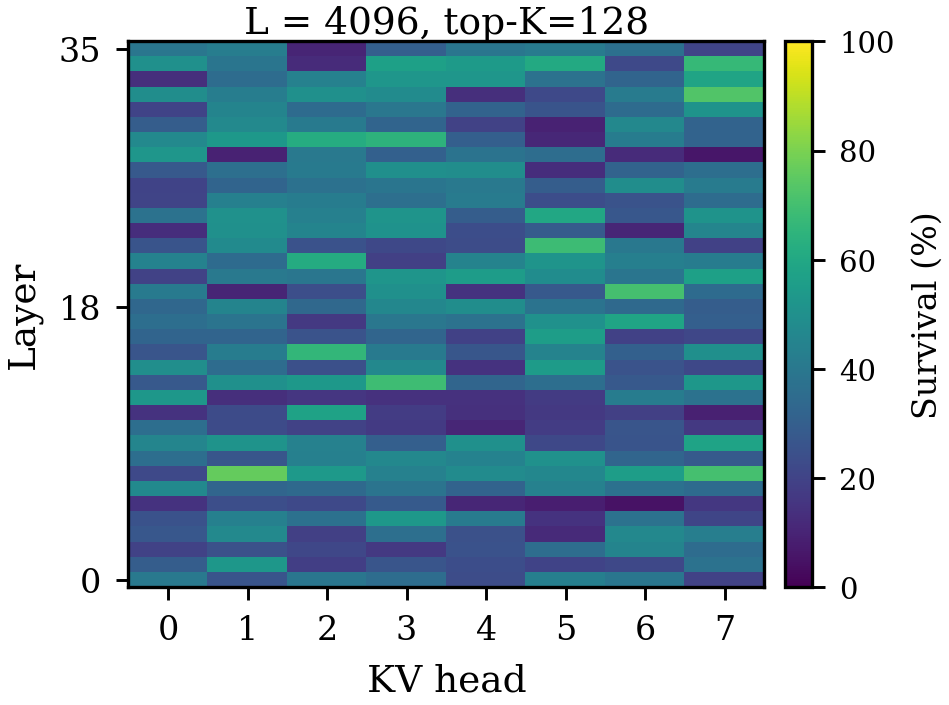}
        \caption{Per-head survival}
        \label{fig:survival_heatmap}
        \vspace{-2mm}
    \end{subfigure}
    \vspace{-4mm}
    \caption{\textbf{a)} Fraction of tokens whose attention persists to horizon
    \(L\) for layer 15, head 24. \textbf{b)} Per-layer, per-head survival at
    \(L=4096\) and top-\(K=128\).}
    \label{fig:survival}
    \vspace{-3mm}
\end{wrapfigure}
Empirically, token persistence exhibits this fast-decay structure under full-cache inference. We run Qwen3-VL-4B on 98 long MMDU multimodal multi-turn dialogues with interleaved text and images, prefilled to \(T=16{,}384\) tokens. For each decoder layer and head, we record which past tokens are selected by each query. A token is counted as alive at horizon \(L\) if it is selected by some query at least \(L\) positions after its birth. Figure~\ref{fig:survival} shows that survival drops rapidly with \(L\): for the shown head, only \(13.8\%\) of tokens survive to \(L=4096\) under top-\(K=256\), and only \(3.2\%\) survive to \(L=16\mathrm{k}\). Even under the more lenient \(99\%\)-mass criterion, the head loses \(60\%\) of mass-relevant tokens by \(L=4096\). Vision tokens, which dominate the multimodal cache, fade at a similar rate as text tokens. In this regime, retaining every old token can dilute attention mass, while evicting tokens whose future utility has already decayed can recover capacity for tokens that remain relevant.

These results support geometric retention as a simple surrogate for future token persistence. In practice, we do not explicitly estimate the exit parameters \((b_i,\epsilon_i)\). Instead, the retention gate predicts \(\beta_i\) directly from the token representation, allowing the model to learn which tokens should decay quickly and which should persist. To avoid recomputing \(\beta_i\) at every compression step, we estimate it only once when the token enters the cache. The resulting retention weight then follows the geometric form
\[
r_{t,i} = \beta_i^{\,t-i},
\]
which can be interpreted as the predicted probability that token \(i\) remains useful at step \(t\) given the information available when it was first cached.

\begin{figure*}
    \centering
   \includegraphics[width=0.9\linewidth]{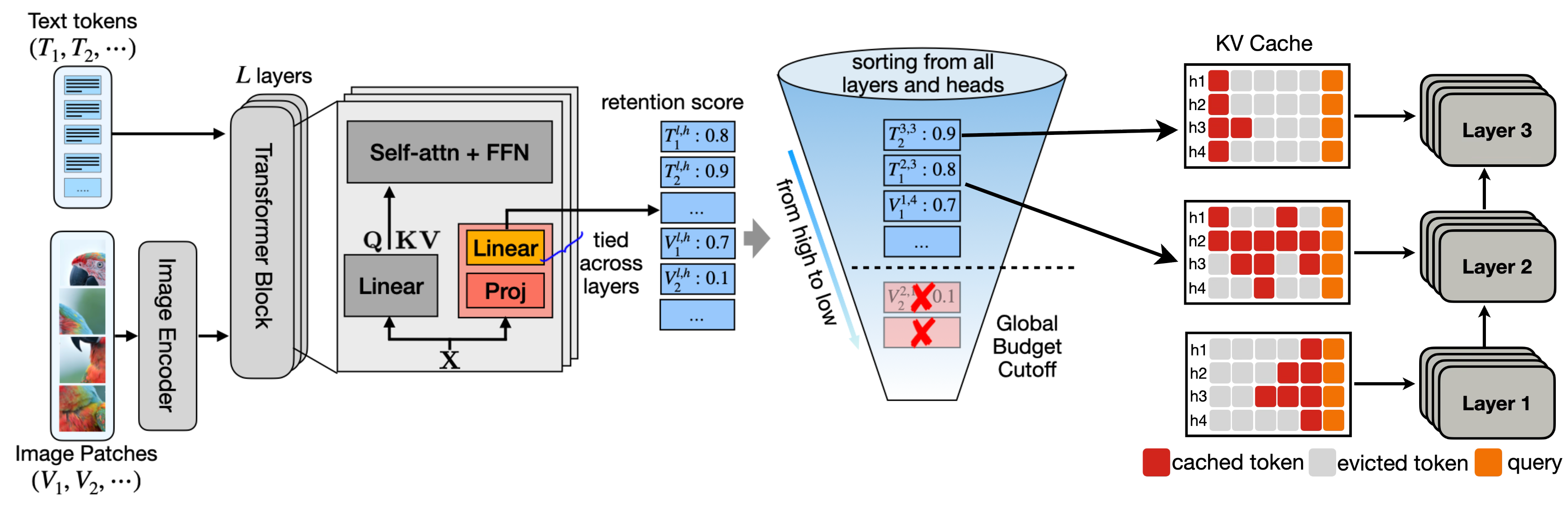}
   \vspace{-4mm}
    \caption{Prefill/compression with our method. A per-layer, per-head retention gate assigns each text/visual token a scalar score; the gate’s final projection is tied across layers to normalize scores. Tokens from all layers, heads, and modalities are then pooled and globally ranked by retention, replacing fixed per-layer/per-head KV budgets.}
    \label{fig:main}
   \vspace{-3mm}
\end{figure*}

\section{Global Token Retention via Weight-Tied Gates}\label{sec:method}

The per-head survival heatmap in Figure~\ref{fig:survival_heatmap} shows strong heterogeneity: a small number of heads preserve long-range tokens, while many
heads quickly lose them.  This raises a key question: \textit{"how should a limited KV budget be allocated across layers and heads?"}. Existing methods typically use fixed per-layer or per-head budgets~\citep{bui2025cache}, or adaptively allocate budgets using a myopic attention heuristics~\citep{feng2024ada}. A natural alternative is to rank tokens by retention, but standard retention gates are trained independently across layers and heads, so their scores are not directly comparable. 

To address this, we introduce \emph{global} KV eviction via weight-tied retention gates. Specifically, we use per-layer, per-head gates whose \emph{final scoring projection is shared} across all layers and heads, placing retention scores on a common scale. This allows all KV entries to be ranked globally under a single cache budget, replacing hand-designed per-layer or per-head allocations with a unified eviction rule. 

\subsection{Architecture and Training}
\paragraph{Weight-tied retention gates.}
Consider a transformer with $L$ layers and $H$ attention heads per layer. For each token at position $t$, layer $\ell$, and head $h$, we predict a retention coefficient
\[
\beta_{\ell,h,t} = g_{\ell,h}(\mathbf{x}_t)\in[0,1],
\]
where $\mathbf{x}_t$ is the token embedding. Our main design choice is the parameterization of $g_{\ell,h}$. Each gate first computes a head-specific embedding, but the final scalar score is produced by a \emph{shared} projection:
\[
g_{\ell,h}(\mathbf{x})
=
\sigma\!\left(\ve w_g^\top \mathrm{Proj}_{\ell,h}(\mathbf{x}) + b_g\right),
\]
where $\mathrm{Proj}_{\ell,h}$ is layer/head-specific, while $(\ve w_g,b_g)$ is tied across all layers and heads. This shared readout calibrates retention scores globally: a score produced in one head has the same meaning as the same score produced in another. Without such tying, retention values are only locally meaningful and cannot reliably support global eviction.

% During training, these coefficients are used exactly as in retention-gated attention, through the exponential decay factor $\beta_{\ell,h,i}^{\,t-i}$. Thus, the model learns which tokens are expected to retain utility over long horizons, while the shared final projection ensures that these utilities are measured on a common scale.

\paragraph{Training.} We follow the same training procedure as~\citep{bui2025cache} described in Section~\ref{sec:retention} but replace the local capacity constraint by a global counterpart
\[
\mathcal{L}_{\text{cap}}
=
\sum_{t=1}^{T}
\max\!\left(
0,\,
\sum_{\ell,h}\sum_{i=1}^{t}\beta_{\ell,h,i}^{\,t-i}
-
M_{\mathrm{global}}
\right),
\]
where $M_{\mathrm{global}}$ is the target global KV budget. We only train retention gates and freeze LLM weights.

\subsection{Global KV Eviction}\label{sec:infer}
At inference time, we use the learned retention coefficients to rank all cached KV entries globally. For each cached token $(\ell,h,i)$ present at decoding step $t$, we assign the retention score
\begin{equation}
\label{eq:global_score}
\widetilde G_{\ell,h,i}(t)
=
\sum_{s=t+1}^{T}\beta_{\ell,h,i}^{\,s-i}
=
\beta_{\ell,h,i}^{\,t+1-i}\,
\frac{1-\beta_{\ell,h,i}^{\,T-t}}{1-\beta_{\ell,h,i}}, \quad \text{for} \quad
\beta_{\ell,h,i} < 1.
\end{equation}
Here, we choose the simple horizon weight $w_{t, s} = 1$. Thus, $\widetilde G_{\ell,h,i}(t)$ aggregates the predicted future utility of token $(\ell,h,i)$ over the remaining decoding horizon $T-t$. In the one-step case $T-t=1$, it reduces to the myopic score $\widetilde G_{\ell,h,i}(t)=\beta_{\ell,h,i}^{\,t+1-i}$, used in~\citep{bui2025cache}.
More generally, for $\beta_{\ell,h,i}\in[0,1)$ and $T-t\to\infty$, then $\textstyle\widetilde G_{\ell,h,i}(t) \rightarrow
\frac{\beta_{\ell,h,i}^{\,t+1-i}}{1-\beta_{\ell,h,i}}$.

The lookahead horizon \(T-t\) therefore governs the trade-off between recency and retention. Recency enters through the factor \(\beta_i^{\,t+1-i}\), while longer horizons increasingly favor tokens with larger retention parameters \(\beta_i\). This softens the effective local sliding-window and geometric decay bias and allows older, slowly decaying tokens to remain competitive.

Our eviction rule is simple now: retain the \(M_{\mathrm{global}}\) tokens with the largest scores \(\tilde G_{\ell,h,i}(t)\) across all layers and heads. Unlike existing budget-allocation methods, this requires no predefined per-layer or per-head budgets. Instead, capacity is assigned automatically by a unified retention ranking.

\begin{wrapfigure}{r}{0.48\textwidth}
    \vspace{-7mm}
    \centering
    \includegraphics[width=\linewidth]{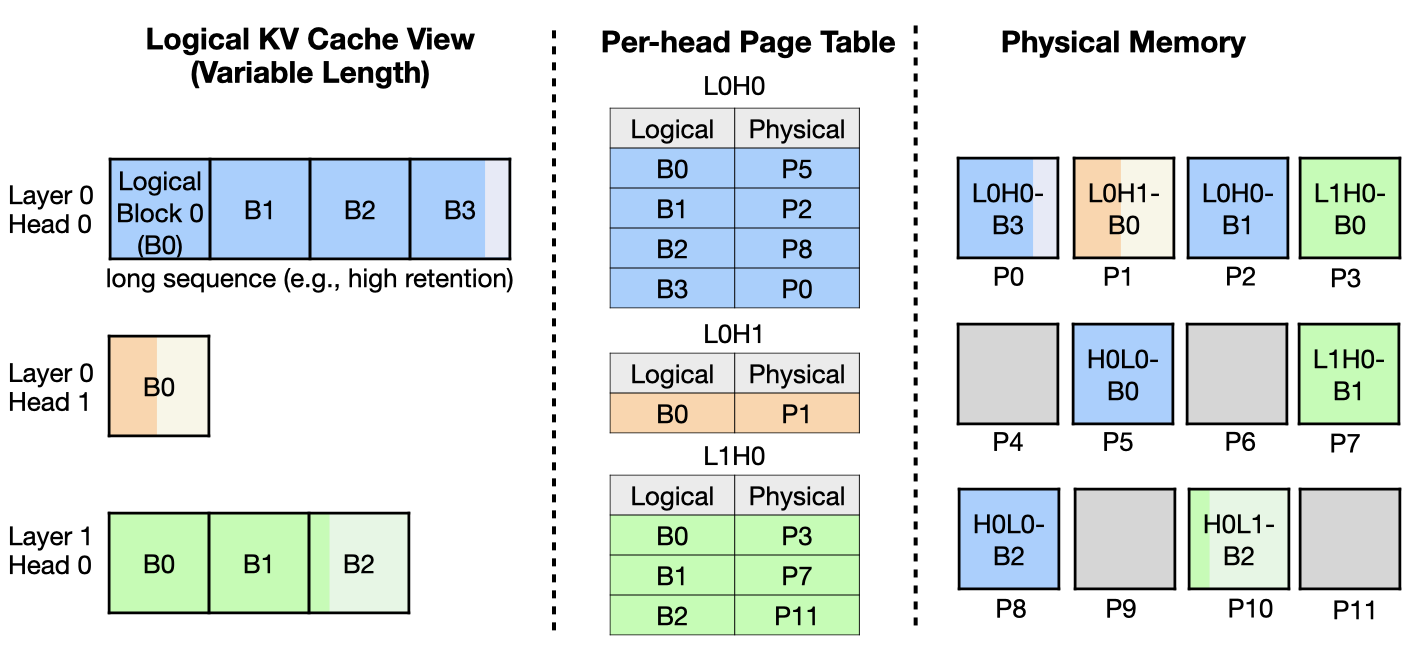}
    \caption{PagedAttention with head-specific variable-length KV caches. Each layer–head maintains a logical KV sequence of variable length, stored as fixed-size blocks.}
    \label{fig:paged_attention}
\end{wrapfigure}
\paragraph{Implementation details.} To support variable cache sizes induced by dynamic, head-specific budgets, we use a paged-attention layout, similar to~\citet{kwon2023efficient}. Specifically, KV entries are stored in fixed-size pages, and each head maintains a block table for its currently active pages. Attention is then computed with variable-length kernels~\citep{dao2023flashattention} using the resulting per-head sequence lengths. This allows each head to maintain a variable-length logical KV sequence without requiring contiguous physical storage or kernel specialization. An example is provided in Figure~\ref{fig:paged_attention}.

\section{Experiments}

We evaluate our approach across benchmarks covering: (i) long-horizon reasoning, (ii) multi-turn dialogue, (iii) short-form question answering, and (iv) long-context question answering. Throughout, we refer to our method as \emph{Dynamic Budget TrimKV} (DBTrimKV).

\subsection{Short-Form Question Answering}
\label{appendix:section:subsection:short-form_question_answering}
\begin{table}[t]
  \centering
  \resizebox{\linewidth}{!}{
  \begin{tabular}{l cccc cccc cccc}
    \toprule
    \multirow{2}{*}{\textbf{Method}} & \multicolumn{4}{c}{\textbf{128 Visual Tokens}} & \multicolumn{4}{c}{\textbf{64 Visual Tokens}} & \multicolumn{4}{c}{\textbf{32 Visual Tokens}} \\
    \cmidrule(lr){2-5} \cmidrule(lr){6-9} \cmidrule(lr){10-13}
    & \textbf{GQA} & \textbf{VQAText} & \textbf{MME} & \textbf{Avg. (\%)} & \textbf{GQA} & \textbf{VQAText} & \textbf{MME} & \textbf{Avg. (\%)} & \textbf{GQA} & \textbf{VQAText} & \textbf{MME} & \textbf{Avg. (\%)} \\
    \midrule
    \rowcolor{gray!20} \cellcolor{white}
    Vanilla* & 61.9 & 58.2 & 1506.5 & 100.0 & 61.9 & 58.2 & 1506.5 & 100.0 & 61.9 & 58.2 & 1506.5 & 100.0 \\
    \midrule
    FastV* & - & - & - & - & 46.0 & 51.6 & 973.5 & 75.9 & - & - & - & - \\
    DART* & 57.9 & 56.3 & 1408.7 & 94.6 & 54.7 & 54.7 & 1365.1 & 91.0 & 52.9 & 52.2 & 1273.3 & 86.6 \\
    PruMerge* & 58.2 & 54.0 & 1408.1 & 93.4 & 55.4 & 52.0 & 1316.8 & 88.8 & 52.9 & 49.2 & 1236.6 & 84.0 \\
    VisionZip* & 57.6 & 56.9 & 1436.9 & 95.4 & 55.1 & 55.5 & 1365.2 & 91.7 & 51.8 & 53.1 & 1251.2 & 86.0 \\
    CDPruner* & 59.9 & 56.2 & 1431.4 & 96.1 & 58.6 & 55.3 & 1415.1 & 94.5 & 57.0 & 53.2 & 1373.0 & 91.5 \\
    DivPrune* & 59.4 & 55.9 & 1405.1 & 95.1 & 57.5 & 54.5 & 1334.7 & 91.7 & 54.9 & 52.9 & 1284.9 & 88.3 \\
    \trimkv & 61.2 & 57.7 & 1490.5 & 99.9 & 61.2 & 57.3 & 1490.5 & 99.7 & 61.1 & 56.8 & 1490.5 & 99.4 \\
    \rowcolor{\ourclr!20}  \cellcolor{white}\dbtrimkv & 61.2 & 57.7 & 1490.5 & 99.9 & 61.1 & 57.3 & 1490.5 & 99.7 & 61.0 & 57.0 & 1490.5 & 99.4 \\
    \bottomrule
  \end{tabular}
  }
  \caption{Performance of competing method on LLaVA-1.5-7B and short-form question answering tasks. The symbol * indicates that the numbers are copied from~\citep{zhang2025beyond}. Our result is reproduced by running with the LMMs-Eval framework. In the Avg. column, we report the relative performance compared to the corresponding vanilla inference.}
  \label{tab:shortgen-llava15}
\end{table}

We first study short-form visual question answering, where a relatively small number of output tokens must be generated conditioned on an image and a short textual prompt.  In this experiment, we compare DBTrimKV and TrimKV against visual token pruning methods that operate exclusively at the prefilling stage and remove only a subset of visual tokens before decoding. To ensure a fair comparison, we fix the total KV budget of DBTrimKV and TrimKV to be equal to the sum of (i) all text tokens in the prompt and (ii) the visual-token budget used by each pruning baseline.

\textbf{Benchmarks.}
We evaluate on three standard short-form VQA benchmarks: VQAText~\citep{singh2019towards}, MME~\citep{fu2025mme}, and GQA~\citep{hudson2019gqa}. These benchmarks cover a range of visual reasoning skills, from simple recognition to compositional and relational reasoning, while keeping generation lengths short.

\textbf{Baselines.} We compare our method against recent visual token pruning approaches, including FastV~\citep{chen2024image}, VisionZip~\citep{yang2025visionzip}, DART~\citep{wen2025stop}, PruMerge~\citep{shang2025llava}, DivPrune~\citep{alvar2025divprune}, and CDPruner~\citep{zhang2025beyond}. All these methods reduce memory usage by selectively pruning visual tokens during prefilling, while leaving the textual KV cache intact throughout decoding.

\textbf{Implementation details.} 
For this experiment, we use LLaVa-1.5-7B as the base model. We train our method on the LLaVA-Next dataset~\citep{liu2024llavanext}. Following~\citep{bui2025cache}, we update only the retention-gate weights while keeping all original model parameters frozen. We set the objective hyperparameter to $\lambda_{\mathrm{cap}} = 1.0$ and the memory capacity to $M_\mathrm{global} = 64 * L * H$, where $L$ and $H$ are the number of layers and heads, respectively. Each transformer block is equipped with a retention gate $g$, implemented as a small MLP with hidden dimension $d_g = 512$. For the retention gate architecture, we implement the $\mathrm{Proj}$ as a two-layer MLP. During the training, we tie the final linear layer $(\ve W, b)$ across all layers and heads as described in Section~\ref{sec:method}. The bias term $b$ is initialized to 18.0 at the start of training. During inference, we use a default lookahead horizon $T-t =2$ for every eviction time $t$.

\textbf{Results.}
As shown in Table~\ref{tab:shortgen-llava15}, both DBTrimKV and TrimKV consistently outperform visual token pruning baselines and match vanilla performance under all memory budgets. This gap highlights a key limitation of prefilling-only visual pruning: by constraining memory exclusively on the vision side, these methods cannot adapt to how different heads and layers actually utilize visual versus textual information during decoding. In contrast, DBTrimKV and TrimKV compress the KV cache jointly over both vision and text tokens and across all layers and heads. This flexibility allows the model to retain more visual tokens in heads that encode rich visual semantics, while aggressively trimming tokens, visual or textual, that contribute less to downstream prediction.

\begin{table*}[t]
  \centering
  \resizebox{0.98\textwidth}{!}{
    \begin{tabular}{llcccccccc}
      \toprule
      & & \multicolumn{3}{c}{\textbf{Image Reasoning}} & \multicolumn{4}{c}{\textbf{Video Reasoning}} & \textbf{Average} \\
      \cmidrule(lr){3-5}\cmidrule(lr){6-9}
      \textbf{Budget} & \textbf{Method} 
      & MMStar & MathVision\textsubscript{mini} & MMMUPro\textsubscript{vision} 
      & VideoMME & VideoMMMU\textsubscript{adap} & VideoMMMU\textsubscript{comp} & VideoMathQA\textsubscript{mcq} 
      & (\% vs Vanilla) \\
      \midrule

      % ===================== VANILLA =====================

  \rowcolor{gray!20} \cellcolor{white}

  & Vanilla 

  & 71.52 & 48.68 & 40.64 

  & 54.22 & 35.67 & 55.00 & 36.19 

  & 100 \\

  \midrule

  % ===================== 1024 =====================

  \multirow{6}{*}{1024}

    & SnapKV          & 51.84 & 15.13 & 18.27 & 49.22 & 21.00 & 28.67 & 20.24 & 58.03 \\

    & R-KV            & 58.42 & 24.67 & 26.24 & 51.48 & 22.00 & 30.33 & 26.43 & 68.82 \\

    & AdaKV           & 66.89 & 32.89 & 25.61 & 52.00 & 25.42 & \underline{31.67} & 23.57 & 73.43 \\

    & Ada-Pyramid-KV  & 66.84 & 28.29 & 25.90 & 53.81 & 28.96 & 30.74 & 23.09 & 73.63 \\

     \cellcolor{white}

    & \trimkv          & \underline{70.64} & \underline{45.72} & \underline{34.34} & \underline{54.15} & \underline{35.00} & \underline{\textcolor{TinaCrimson}{59.00}} & \underline{35.00} & \underline{97.02} \\

  \rowcolor{\ourclr!20} \cellcolor{white}

    & \dbtrimkv        & \textbf{71.50} & \textbf{\textcolor{TinaCrimson}{52.63}} & \textbf{\textcolor{TinaCrimson}{41.1}} & \textbf{\textcolor{TinaCrimson}{54.93}} & \textbf{\textcolor{TinaCrimson}{37.00}} & \textbf{\textcolor{TinaCrimson}{59.33}} & \textbf{\textcolor{TinaCrimson}{36.43}} & \textbf{\textcolor{TinaCrimson}{103.26}} \\

  \midrule
  % ===================== 512 =====================

  \multirow{6}{*}{512}

    & SnapKV          & 51.01 & 14.80 & 9.36 & 49.22 & 22.74 & 21.40 & 18.09 & 52.60 \\

    & R-KV            & 55.23 & 20.72 & 15.54 & 49.51 & 20.40 & 21.84 & 19.76 & 57.26 \\

    & AdaKV           & 60.14 & 17.11 & 16.99 & 53.77 & 22.15 & 24.67 & 19.05 & 59.97 \\

    & Ada-Pyramid-KV  & 60.41 & 15.46 & 15.61 & 52.63 & 19.73 & 19.46 & 17.14 & 55.68 \\

    \cellcolor{white}

    & \trimkv          & \underline{70.28} & \underline{42.11} & \underline{34.34} & \textbf{\textcolor{TinaCrimson}{54.74}} & \underline{34.00} & \underline{51.51} & \underline{32.86} & \underline{92.85} \\

  \rowcolor{\ourclr!20} \cellcolor{white}

    & \dbtrimkv        & \textbf{\textcolor{TinaCrimson}{71.85}} & \textbf{\textcolor{TinaCrimson}{51.97}} & \textbf{\textcolor{TinaCrimson}{43.87}} & \underline{\textcolor{TinaCrimson}{54.41}} & \textbf{35.57} & \textbf{\textcolor{TinaCrimson}{61.33}} & \textbf{35.95} & \textbf{\textcolor{TinaCrimson}{103.73}} \\

  \midrule
  % ===================== 256 =====================

  \multirow{6}{*}{256}

    & SnapKV          & 50.89 & 7.89 & 6.24 & 50.15 & 17.11 & 14.92 & 14.52 & 44.35 \\

    & R-KV            & 53.01 & 16.12 & 12.37 & 51.26 & 14.38 & 17.27 & 18.10 & 50.56 \\

    & AdaKV           & 53.30 & 8.55 & 8.27 & 52.15 & 17.41 & 18.77 & 14.76 & 47.48 \\

    & Ada-Pyramid-KV  & 52.56 & 8.22 & 8.09 & 50.52 & 16.90 & 15.17 & 15.95 & 46.07 \\

    \cellcolor{white}

    & \trimkv          & \underline{65.67} & \underline{40.79} & \underline{31.45} & \underline{53.15} & \underline{30.67} & \underline{42.48} & \underline{28.81} & \underline{84.83} \\

  \rowcolor{\ourclr!20} \cellcolor{white}

    & \dbtrimkv        & \textbf{69.91} & \textbf{\textcolor{TinaCrimson}{51.64}} & \textbf{\textcolor{TinaCrimson}{41.85}} & \textbf{54.15} & \textbf{\textcolor{TinaCrimson}{39.00}} & \textbf{\textcolor{TinaCrimson}{57.00}} & \textbf{32.86} & \textbf{\textcolor{TinaCrimson}{101.49}} \\

  \midrule

  % ===================== 128 =====================

  \multirow{6}{*}{128}

    & SnapKV          & 40.43 & 1.64 & 2.66 & 46.15 & 14.58 & 13.06 & 11.90 & 35.58 \\

    & R-KV            & 50.04 & 8.22 & 4.51 & 46.56 & 11.38 & 11.03 & 14.29 & 39.32 \\

    & AdaKV           & 40.89 & 2.96 & 4.14 & 47.44 & 16.38 & 9.59 & 13.33 & 37.30 \\

    & Ada-Pyramid-KV  & 41.60 & 3.62 & 4.05 & 45.78 & 15.99 & 13.57 & 12.86 & 37.86 \\

    & \trimkv  & \underline{62.29} & \underline{27.30} & \underline{24.34} & \underline{52.55} & \underline{22.90} & \underline{28.38} & \underline{22.14} & \underline{68.13} \\

  \rowcolor{\ourclr!20} \cellcolor{white}

    & \dbtrimkv        & \textbf{68.69} & \textbf{47.04} & \textbf{40.41} & \textbf{\textcolor{TinaCrimson}{54.85}} & \textbf{34.33} & \textbf{\textcolor{TinaCrimson}{57.00}} & \textbf{34.29} & \textbf{98.27} \\

    \midrule
    % ===================== 64 =====================
    \cellcolor{white}
        & \trimkv      & \underline{54.12} & \underline{12.17} & \underline{13.24} & \underline{49.74} & \underline{18.40} & \underline{18.18} & \underline{19.52} & \underline{51.94} \\
      \rowcolor{\ourclr!20}
      \multirow{-2}{*}{\cellcolor{white}64}
        & \dbtrimkv      & \textbf{65.44} & \textbf{33.88} & \textbf{32.49} & \textbf{54.15} & \textbf{28.67} & \textbf{42.67} & \textbf{25.71} & \textbf{81.42} \\
        
      \bottomrule
    \end{tabular}
  }
  \caption{Results across image and video reasoning benchmarks. The best eviction methods are shown in \textbf{bold}, and the second-best are \underline{underlined}. Performances that exceed vanilla inference are highlighted in \textcolor{TinaCrimson}{\textbf{red}}. The superscript \textsuperscript{\orgfire} denotes a trainable KV eviction method.}
  \label{tab:main}
\end{table*}

\begin{figure*}[t]
    \centering
    \includegraphics[width=0.9\linewidth]{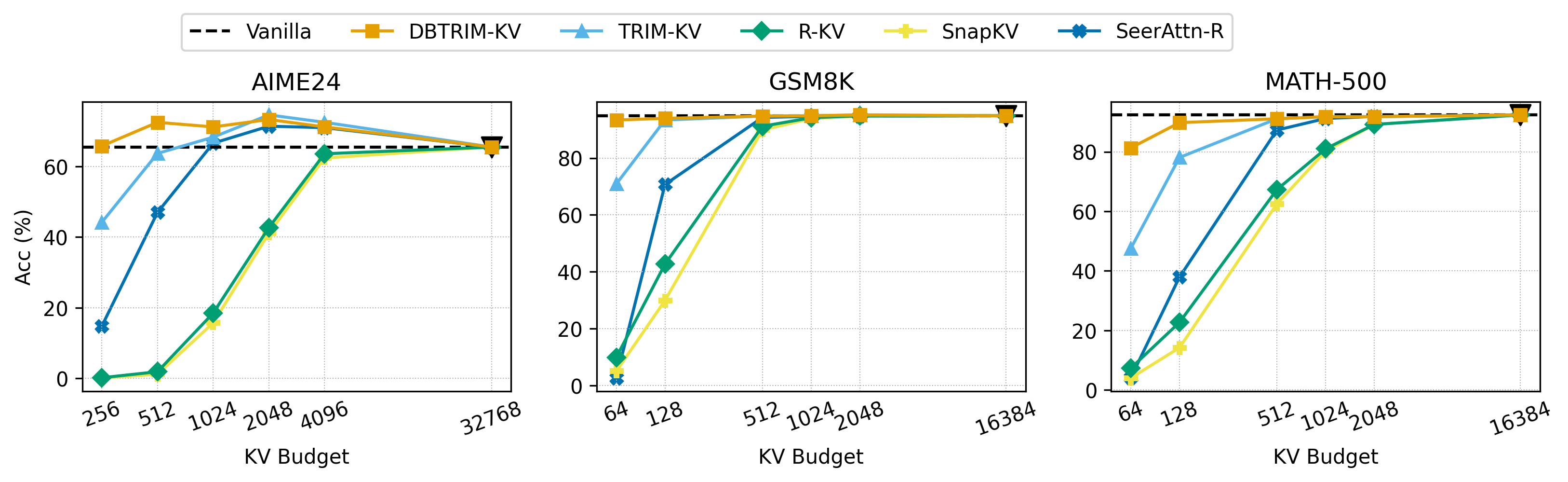}
    \caption{Long-reasoning performance of pure language model (Qwen3-4B).}
    \label{fig:math_full}
\end{figure*}

\subsection{Long-Horizon Reasoning}\label{sec:exp_long}

% Short-form QA mainly stresses memory during prefilling, whereas
Real VLM applications require long-context, long-horizon generation where KV pressure accumulates over many decoding steps. In this regime, eviction must preserve long-lived evidence, and suboptimal allocation compounds over time. We therefore evaluate on long-horizon reasoning benchmarks that amplify these effects through both longer contexts and longer generations.

\begin{figure*}[t]
    \centering
    \begin{subfigure}{0.245\linewidth}
        \centering
        \includegraphics[width=0.9\linewidth]{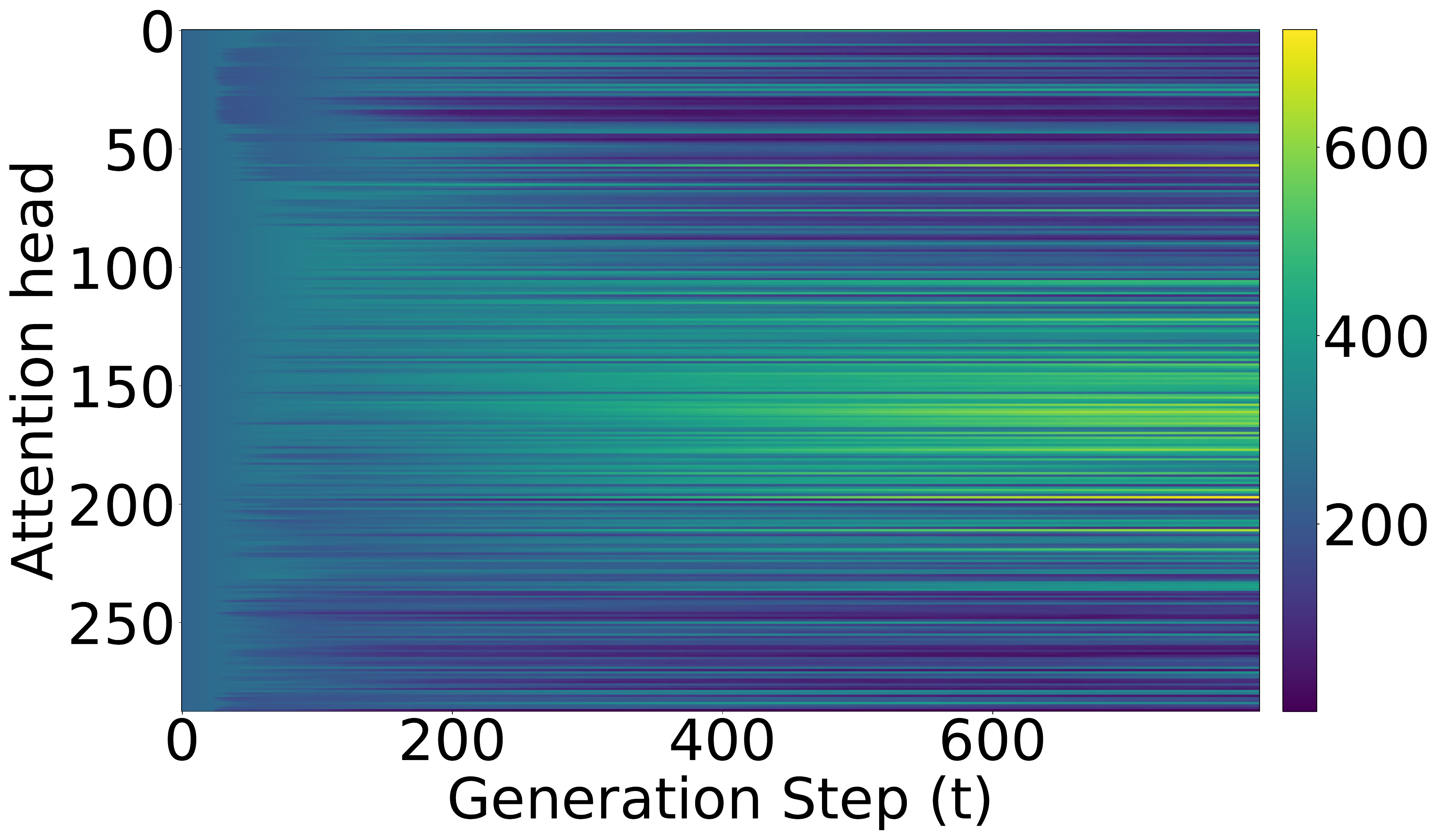}
        \caption{\#tokens}
        \label{fig:kv_cache_usage:total_tokens}
    \end{subfigure}
    \begin{subfigure}{0.245\linewidth}
        \centering
        \includegraphics[width=0.9\linewidth]{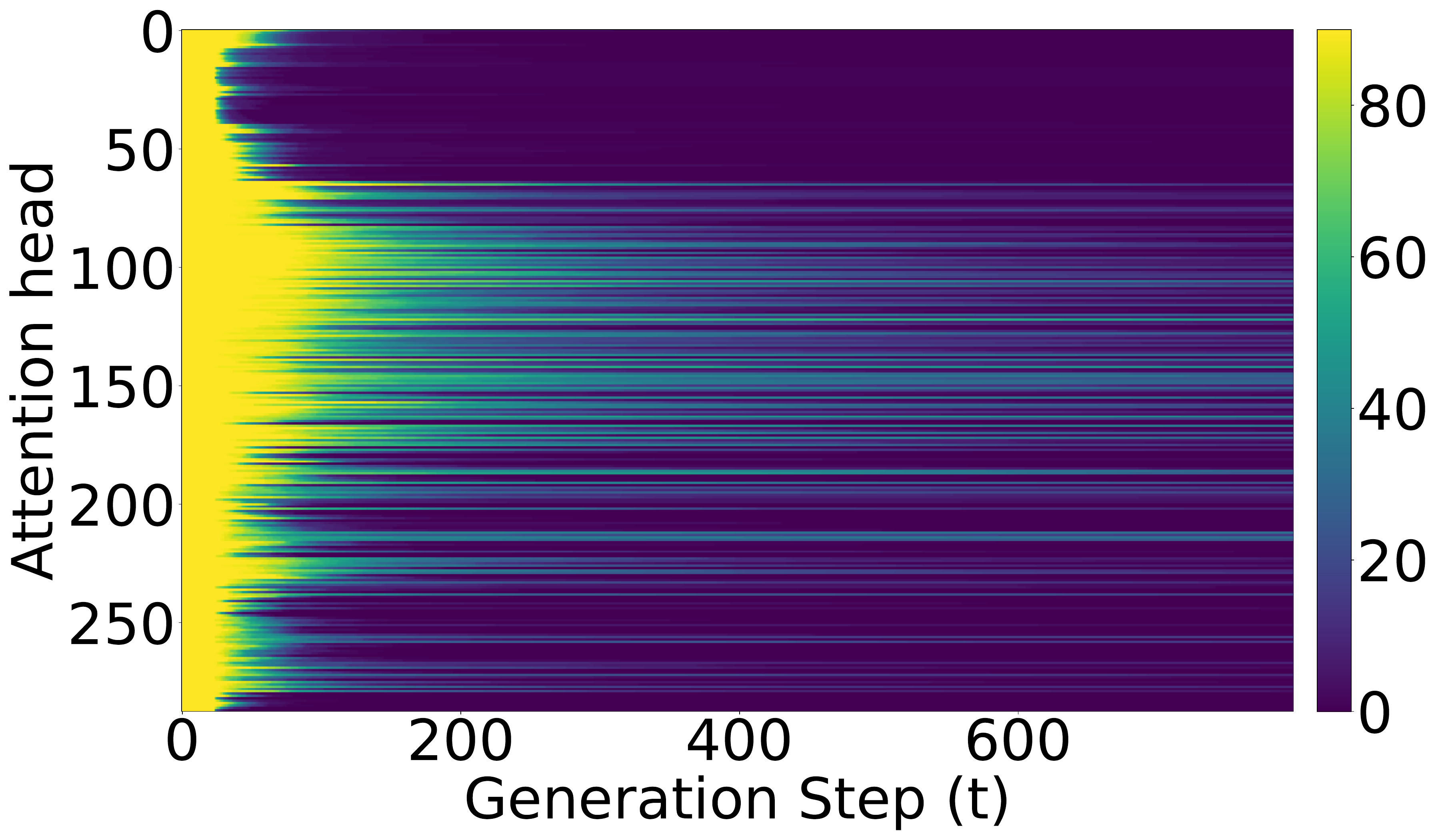}
        \caption{\#visual tokens}
        \label{fig:kv_cache_usage:visual_tokens}
    \end{subfigure}
    \hfill
    \begin{subfigure}{0.245\linewidth}
        \centering
        \includegraphics[width=0.9\linewidth]{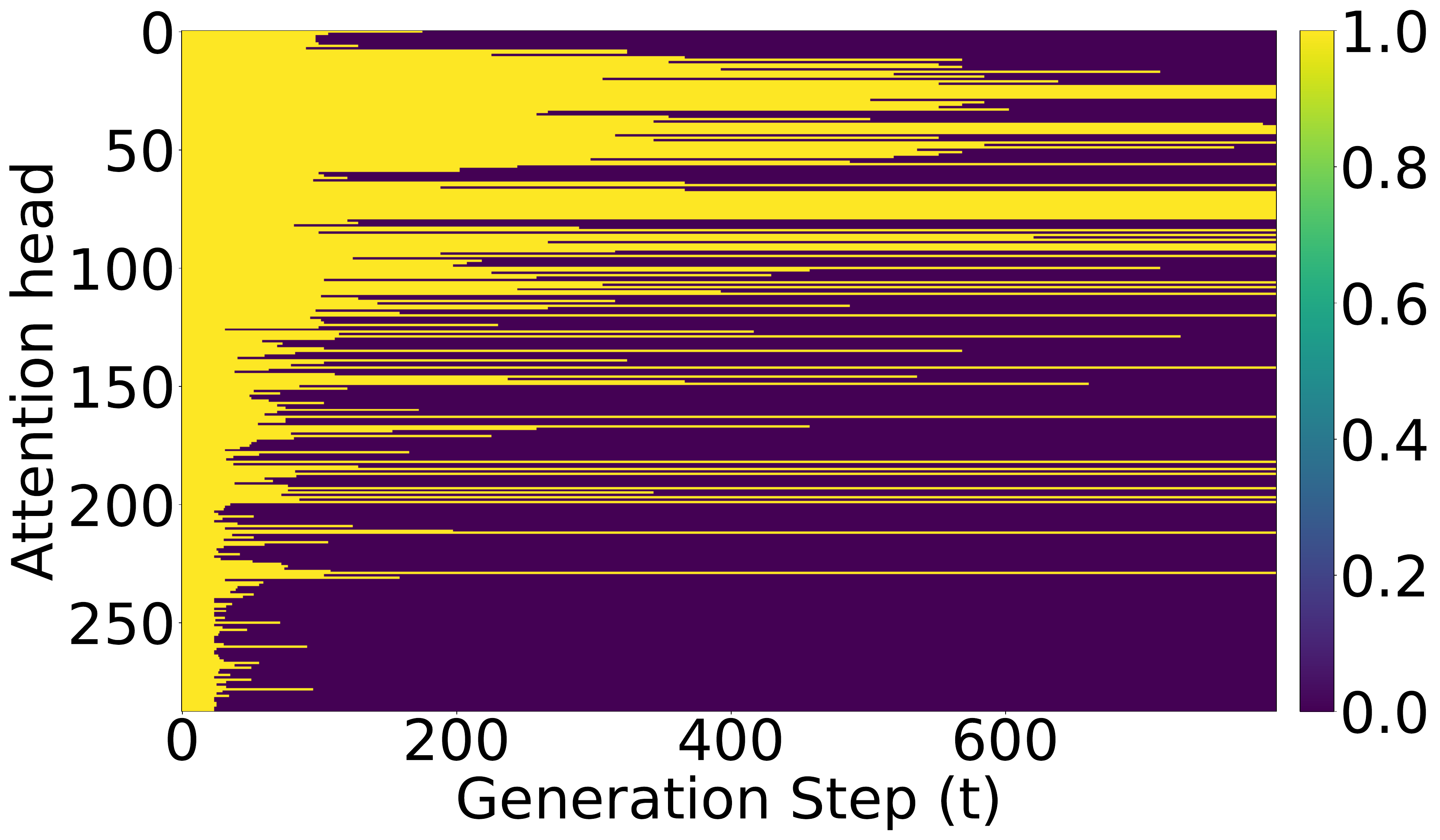}
        \caption{$\texttt{<|vision\_start|>}$}
        \label{fig:kv_cache_usage:vision_start}
    \end{subfigure}
    \begin{subfigure}{0.245\linewidth}
        \centering
        \includegraphics[width=0.9\linewidth]{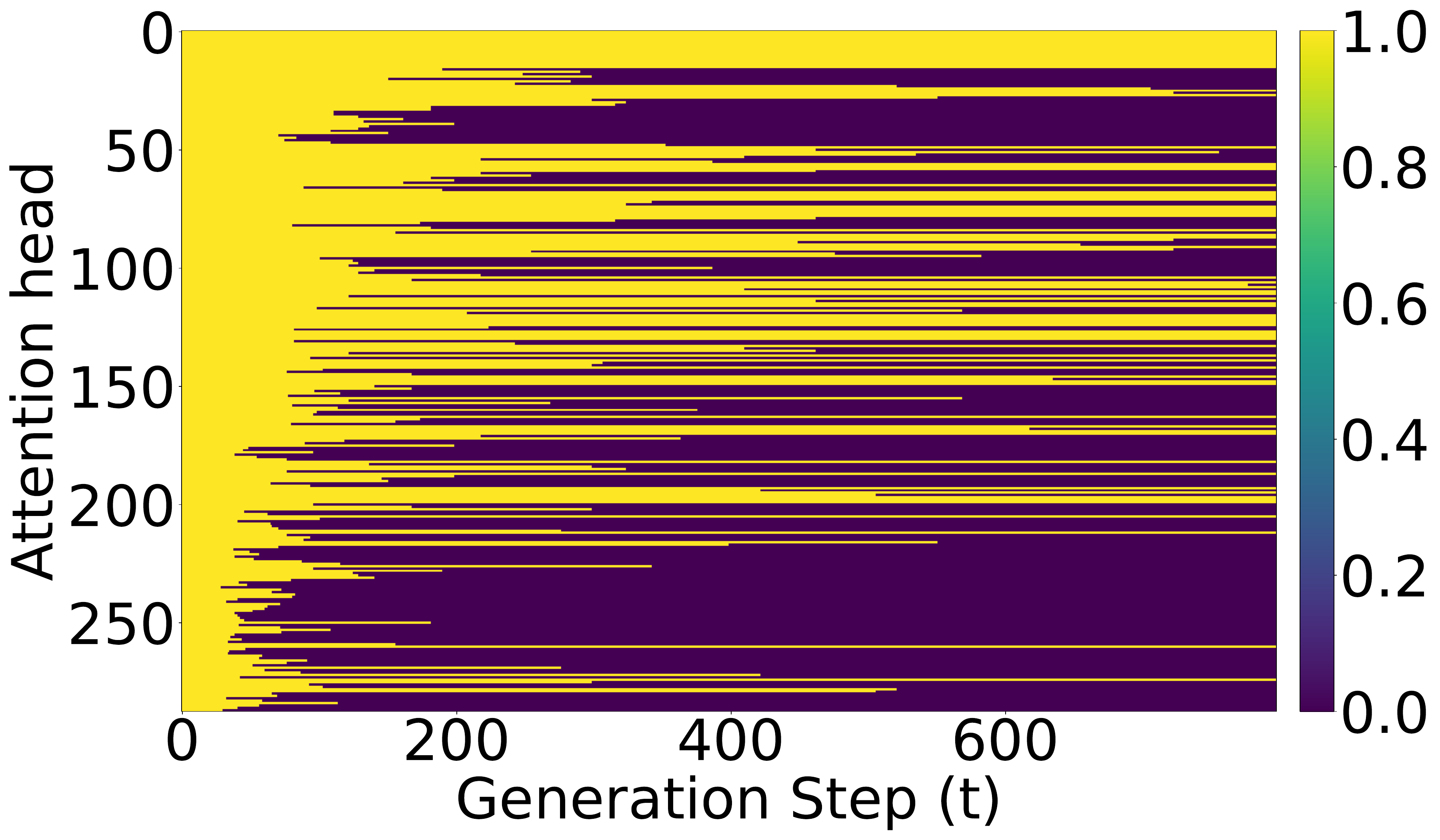}
        \caption{$\texttt{<|vision\_end|>}$}
        \label{fig:kv_cache_usage:vision_end}
    \end{subfigure}
    \caption{Visualization of KV-cache usage across attention heads for a sample in the MathVision\textsubscript{mini} dataset. The $y$-axis indexes the attention heads sequentially from layer 1 to layer 36, with each layer containing 8 KV heads. Figures~\ref{fig:kv_cache_usage:total_tokens},~\ref{fig:kv_cache_usage:visual_tokens},~\ref{fig:kv_cache_usage:vision_start}, and~\ref{fig:kv_cache_usage:vision_end} show the number of total tokens, visual tokens, $\texttt{<|vision\_start|>}$, and $\texttt{<|vision\_end|>}$ tokens that are active in the KV cache, respectively.}
    \label{fig:kv_cache_usage}
\end{figure*}

\textbf{Benchmarks.} We consider three image-based benchmarks, including MMStar~\citep{chen2024we}, MathVision\textsubscript{mini}~\citep{wang2024measuring}, and MMMUPro\textsubscript{vision}~\citep{yue2025mmmu}, and three video-based benchmarks, such as VideoMME~\citep{fu2025video}, VideoMathQA\textsubscript{mcq}~\citep{rasheed2025videomathqa}, and VideoMMMU~\citep{hu2025video} (adaptation and comprehension subsets). For a pure text reasoning, we evaluate on AIME24~\citep{aime2024}, GSM8K~\citep{cobbe2021training}, and MATH-500~\citep{hendrycks2021measuring}, similar to~\citep{bui2025cache}. Unless otherwise specified, we use greedy decoding and report accuracy as the evaluation metric. We adopt LMMs-Eval~\citep{zhang2024lmmseval} as the standard framework for evaluating different methods on VLM benchmarks. Results are reported using a global budget of $M \times (\#\text{layers}) \times (\#\text{heads})$ for a fair comparison with fixed-budget methods.

\textbf{Baselines.} We compare against strong KV-eviction methods for long-context decoding, including SnapKV~\citep{li2024snapkv}, AdaKV~\citep{feng2024ada}, Ada-Pyramid-KV~\citep{cai2024pyramidkv}, R-KV~\citep{cai2025r}, and TrimKV~\citep{bui2025cache}. Among these, AdaKV and Ada-Pyramid-KV extend SnapKV with dynamic budget allocation. For pure-text reasoning, we also include SeerAttn-R~\citep{gao2025seerattention}, a state-of-the-art learnable KV retrieval method tailored to reasoning-oriented language models.

\textbf{Implementation details.}
We use Qwen3-VL-8B-Thinking as a strong visual reasoning model. To train the retention gates, we use a mixture of multiple training datasets, including M4Instruct~\citep{li2024llava}, Academic Video~\citep{zhang2024videoinstructiontuningsynthetic}, R1-OneVision~\citep{yang2025r1}, and OpenR1-Math-220k~\citep{openr1math2025}, to cover diverse training scenarios. Since the retention gates are lightweight, comprising only 0.43\% of the total VLM parameters, we use a subset of each dataset for training. For the pure text experiment, we mimic the settings from~\citep{bui2025cache} and train on Qwen3-4B on OpenR1-Math-220K~\citep{openr1math2025} for fair comparisons. Other settings  are similar to Section~\ref{appendix:section:subsection:short-form_question_answering}.

\textbf{Quantitative results.} Table~\ref{tab:main} reports the performance of different KV eviction methods on image and video reasoning benchmarks. Figure~\ref{fig:math_full} shows the Pareto fronts of different methods under varying KV budgets in pure text reasoning settings. It can be seen that DBTrimKV achieves state-of-the-art performance in all settings compared to other KV eviction baselines, with particularly large gains in low-budget regimes.

Comparisons between fixed-budget and dynamic-budget methods (DBTrimKV vs.\ TrimKV and AdaKV/Ada-Pyramid-KV vs.\ SnapKV) show that dynamically allocating the KV budget generally yields better performance than using a fixed allocation, with the performance gap widening as the overall budget decreases. Notably, DBTrimKV with budgets of 1024, 512, and even 256 surpasses vanilla full-cache inference with up to $3.75\%$. This indicates that, beyond improving memory and computational efficiency, token retention methods can also potentially enhance model performance relative to standard full-cache decoding. The gains are especially pronounced on challenging tasks that require long-form generation, such as AIME24, MathVision, MMMUPro, and VideoMMMU subsets. In contrast, on tasks involving shorter reasoning, such as GSM8K, MATH-500, MMStar, and VideoMME, the performance differences among KV eviction methods are smaller.

\textbf{Qualitative results.} Figure~\ref{fig:kv_cache_usage} illustrates DBTrimKV's highly dynamic and specialized KV cache allocation on a MathVision\textsubscript{mini} example. The total tokens (Figure ~\ref{fig:kv_cache_usage:total_tokens}) is aggressively compressed and concentrated into a sparse subset of heads, frequently in the mid-layers, suggesting the model selectively preserves context in the specific heads responsible for integrating semantic concepts and synthesizing information. For visual tokens (Figure ~\ref{fig:kv_cache_usage:visual_tokens}), dense context is retained early to interpret the image, but shrinks markedly as decoding shifts toward text-centric reasoning. Interestingly, while fine-grained visual patches are heavily evicted, the model persistently retains the boundary tokens $\texttt{<|vision\_start|>}$ (Figure ~\ref{fig:kv_cache_usage:vision_start}) and $\texttt{<|vision\_end|>}$ (Figure ~\ref{fig:kv_cache_usage:vision_end}) across numerous heads throughout generation. This suggests these tokens act as compressed structural anchors, grounding the model's reasoning without the memory cost of full visual details. Further visualizations demonstrating DBTrimKV's ability to isolate task-relevant visual tokens while aggressively discarding background distractors are provided in Appendix~\ref{appendix:section:visualization}.

\subsection{Multi-turn Dialogue}

\begin{table*}[t]
  \centering
  \resizebox{\textwidth}{!}{
    \begin{tabular}{l ccc ccc ccc}
      \toprule
      \multirow{2}{*}{\textbf{Method}} & \multicolumn{3}{c}{\textbf{512 Budget}} & \multicolumn{3}{c}{\textbf{256 Budget}} & \multicolumn{3}{c}{\textbf{128 Budget}} \\
      \cmidrule(lr){2-4} \cmidrule(lr){5-7} \cmidrule(lr){8-10}
      & \textbf{Accuracy} & \textbf{Overall} & \textbf{\% vs Vanilla} & \textbf{Accuracy} & \textbf{Overall} & \textbf{\% vs Vanilla} & \textbf{Accuracy} & \textbf{Overall} & \textbf{\% vs Vanilla} \\
      \midrule
      \rowcolor{gray!20}
      Vanilla & 3.37 & 3.57 & 100 & 3.37 & 3.57 & 100 & 3.37 & 3.57 & 100 \\
      \midrule
      SnapKV & 3.07 & 3.46 & 96.70 & 2.54 & 2.88 & 80.51 & 2.01 & 2.29 & 64.00 \\
      R-KV & 2.76 & 3.00 & 84.00 & 2.42 & 2.66 & 74.45 & 2.02 & 2.17 & 60.77 \\
      AdaKV & 2.89 & 3.24 & 90.79 & 2.56 & 2.88 & 80.68 & 2.03 & 2.25 & 62.98 \\
      \trimkv & \textcolor{TinaCrimson}{\underline{3.46}} & \textcolor{TinaCrimson}{\underline{3.73}} & \textcolor{TinaCrimson}{\textbf{104.51}} & \underline{3.16} & \underline{3.56} & 99.65 & \underline{3.05} & \underline{3.35} & 93.61 \\
      \rowcolor{\ourclr!20}
      \dbtrimkv & \textcolor{TinaCrimson}{\textbf{3.79}} & \textcolor{TinaCrimson}{\textbf{4.09}} & \textcolor{TinaCrimson}{\textbf{114.46}} & \textcolor{TinaCrimson}{\textbf{3.64}} & \textcolor{TinaCrimson}{\textbf{3.87}} & \textcolor{TinaCrimson}{\textbf{108.33}} & \textcolor{TinaCrimson}{\textbf{3.43}} & \textcolor{TinaCrimson}{\textbf{3.72}} & \textcolor{TinaCrimson}{\textbf{104.10}} \\
      \bottomrule
    \end{tabular}
  }
  \caption{Comparison of different methods across selected metrics under varying token budgets.}
  \label{tab:mmdu}
\end{table*}

We evaluate on multi-turn visual dialogue using MMDU~\citep{liu2024mmdu} to assess KV eviction under interactive, open-ended settings. Unlike single-shot QA, multi-turn dialogue requires the model to retain and selectively reuse information introduced many turns earlier, while accommodating new, potentially unexpected questions at each turn. In this experiment, we use Qwen3-VL-4B-Instruct as the base model and train on the MMDU-45K dataset~\citep{liu2024mmdu}. For evaluation, we adopt Gemini3~\citep{team2023gemini, comanici2025gemini} as an LLM-based judge, following the MMDU protocol. Other settings are similar to~Section~\ref{sec:exp_long}.

Table~\ref{tab:mmdu} shows that DBTrimKV and TrimKV outperform heuristic eviction baselines by a wide margin, with the advantage widening as the KV budget shrinks. Notably, DBTrimKV at all budgets outperforms vanilla inference with full cache (by 14\% at $M=512$). These results underscore the value of dynamic allocation: DBTrimKV consistently improves over its fixed-budget counterpart (TrimKV), and AdaKV similarly improves over SnapKV, indicating that adaptive capacity allocation is especially crucial for long, interactive dialogue applications.

\subsection{Runtime and Ablations}

\begin{wrapfigure}{r}{0.31\textwidth}
    \vspace{-4mm}
    \centering
    % --- Top Tables ---
    \resizebox{\linewidth}{!}{
    \begin{tabular}{lcc}
        \toprule
        Method/Budget & 256 & 512 \\
        \midrule
        Vanilla &  \multicolumn{2}{c}{48.68} \\
        \trimkv & 40.79 & 42.14 \\
        \dbtrimkv~w/o tying & 40.13 & 44.74 \\
        \dbtrimkv & 51.64 & 51.67 \\
        \bottomrule
    \end{tabular}
    }
    \captionof{table}{Weight tying ablation.}
    \label{tab:abl_weight_tying}
    \vspace{-6mm}
\end{wrapfigure}
We conduct ablation studies on the MathVision dataset to isolate the impact of our gate design.

\textbf{No weight tying.} 
Table~\ref{tab:abl_weight_tying} removes weight tying in the final linear layer of the retention gates. Weight tying yields a substantial improvement, especially in the low-budget regime, consistent with its role in making retention scores comparable across layers and heads.

\begin{wrapfigure}{r}{0.35\textwidth}
    \vspace{-4mm}
    \centering
    \resizebox{\linewidth}{!}{
    \begin{tabular}{lcc}
        \toprule
        Input/Budget & 256 & 1024 \\
        \midrule
        \dbtrimkv - $g_{\ell, h} ([\ve k_t || \ve v_t])$ & 43.75 & 46.71 \\
        \dbtrimkv - $g_{\ell, h} (\ve x_t)$ & 51.64 & 52.63 \\
        \bottomrule
    \end{tabular}
    }
    \captionof{table}{Input ablation.}
    \label{tab:abl_input}
    \vspace{-4mm}
\end{wrapfigure}
\textbf{Retention gate inputs.} We also test an alternative gate that takes the concatenation of the key and value vectors as input. As shown in Table~\ref{tab:abl_input}, using the token contextual embedding $\ve x_t$ as input performs better, particularly under tight budgets, suggesting that $\ve x_t$ provides a stronger long-horizon utility signal than the projected KV representations alone.

\begin{wrapfigure}{r}{0.36\textwidth}
    \vspace{-4mm}
    \centering
    \includegraphics[width=\linewidth]{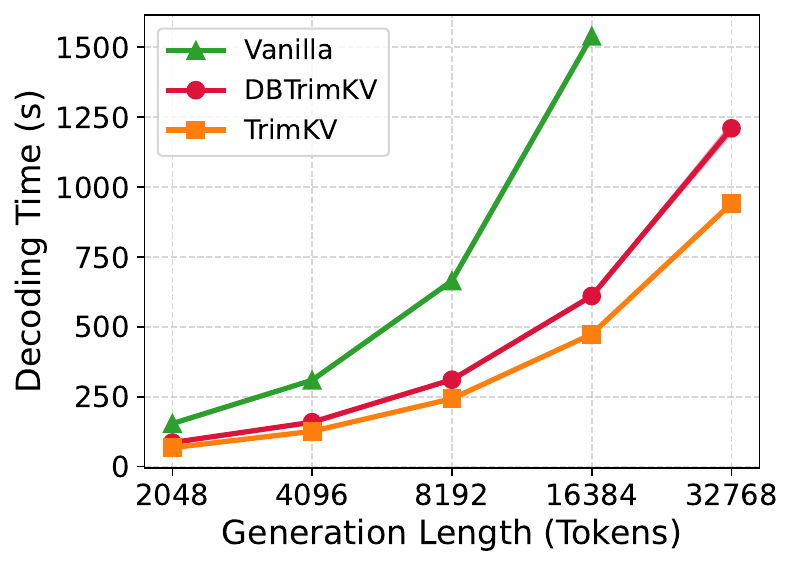}
    \caption{Efficiency scaling with generation length.}
    \label{fig:gen_length_main}
\end{wrapfigure}
\paragraph{Runtime.}
Figure~\ref{fig:gen_length_main} reports total decoding time in seconds as the number of generated tokens increases. Evaluated methods include Vanilla (Full Cache), DBTrimKV, and TrimKV. We fix batch size at 16, context length at 16384 and KV budget at 256. DBTrimKV with PagedAttention remains highly efficient. DBTrimKV adds a small overhead compared to TrimKV due to variable-length, head-specific cache handling; however, it still outperforms vanilla inference at scale. Overall, DBTrimKV is practical for large-batch deployment, with PagedAttention enabling efficient dynamic budgeting. We expect the speedups over vanilla to grow with longer contexts and longer generations, where KV bandwidth becomes the dominant bottleneck. More details are provided in Appendix~\ref{appendix:section:efficiency_scaling}.

\section{Related Work}

\textbf{KV cache compression.} Since the advent of long-context LMs and VLMs~\citep{liu2025comprehensive}, KV-cache compression has become a central efficiency challenge~\citep{li2024survey}. Early efforts primarily target the prefill stage and can be grouped into three main directions: (i) vector compression and quantization~\citep{hooper2024kvquant, liu2024kivi, yue2024wkvquant, sun2024shadowkv}, (ii) token retrieval~\citep{tang2024quest, liu2024retrievalattention, gao2025seerattention}, and (iii) token eviction or merging~\citep{zhang2023h2o, nawrot2024dynamic, zhang2024cam, qin2025cake, wang2025llms, liu2025can, park2025keydiff}. While effective in many settings, these approaches often incur nontrivial systems overhead (e.g., compression, selection, or data movement), making them less suitable for long-horizon generation tasks such as long reasoning, interactive use, or streaming applications. In these scenarios, simple token-eviction heuristics~\citep{xiao2023efficient, li2024snapkv, cai2025r} are typically preferred due to their low overhead, at the cost of potential performance degradation. Recent work~\citep{bui2025cache} proposes token retention as an effective learnable alternative for driving eviction policies. However, most studies on long-horizon efficiency focus on text-only LLMs, and the behavior and effectiveness of these methods in multimodal settings remain underexplored. We bridge this gap by providing an extensive evaluation of KV-cache compression strategies, including retention-based eviction, on VLM benchmarks.

\textbf{Dynamic budget allocation for KV cache.}
Beyond selecting \emph{which} tokens to evict, a complementary line of work studies how KV-cache \emph{capacity} should be allocated across layers, heads, or modalities. For text-only LMs, AdaKV~\citep{feng2024ada} adapts per-layer KV sizes using attention statistics, PyramidKV~\citep{cai2024pyramidkv} uses a hand-crafted pyramid that favors lower layers, CAKE~\citep{qin2025cake} couples token clustering with layer-wise budget control, and ZigZagKV~\citep{zhong2025zigzagkv} alternates dense and sparse layers to trade memory for accuracy. Dynamic budgeting has also been explored for VLMs: VL-KV~\citep{tu2024vl} introduces modality-aware KV compression, while MEDA~\citep{wan2025meda} allocates attention and memory differently across visual and textual tokens or visual frame as in~\citep{li2025less}. Despite improved utilization over uniform budgets, these methods typically rely on heuristics or separate, locally enforced budgets. In contrast, we eliminate per-layer/head/modality budgets and enforce a single \emph{global} KV constraint, using learned retention scores as the sole signal to jointly drive eviction and allocation.

\textbf{Visual token pruning.} In parallel with KV cache compression, a separate line of work focuses on visual token pruning. Recent approaches, such as VisionZip~\citep{yang2025visionzip} and FastV~\citep{chen2024image}, prune tokens using pretrained attention maps, while DivPrune~\citep{alvar2025divprune} and CDPruner~\citep{zhang2025beyond} enforce diversity within the selected vision tokens and SparseVLM~\citep{zhang2024sparsevlm} scores patches via text-to-vision attention. However, these methods operate primarily on vision tokens and often rely on unimodal or local attention signals for pruning. In contrast, our approach acts directly on the KV cache and jointly manages both vision and text tokens under a unified memory budget, allocating capacity across heads and layers based on downstream utility.

\section{Conclusion}

We introduced DBTrimKV, a globally calibrated KV eviction method for long-context LMs and VLMs. DBTrimKV learns retention scores that estimate the future utility of cached tokens and uses a single global budget to allocate memory dynamically across layers, heads, and modalities. Our analysis and experiments show that selective eviction can reduce attention dilution, improve reasoning over relevant context, and substantially lower KV memory. Across language, vision-language, and multi-turn dialogue benchmarks, DBTrimKV consistently outperforms prior eviction baselines and can match or exceed full-cache inference with much less memory. Beyond improving inference efficiency, our results suggest a broader view of KV eviction: selective forgetting can act as a form of attention regularization that suppresses distractors and sharpens reasoning over relevant context. We hope this work motivates future research on learned memory management for long-context models, including adaptive retention objectives, retrieval-aware cache policies, and training-time integration of eviction mechanisms into large-scale foundation models.

% \begin{ack}
% Use unnumbered first level headings for the acknowledgments. All acknowledgments
% go at the end of the paper before the list of references. Moreover, you are required to declare
% funding (financial activities supporting the submitted work) and competing interests (related financial activities outside the submitted work).
% More information about this disclosure can be found at: \url{https://neurips.cc/Conferences/2026/PaperInformation/FundingDisclosure}.

% Do {\bf not} include this section in the anonymized submission, only in the final paper. You can use the \texttt{ack} environment provided in the style file to automatically hide this section in the anonymized submission.
% \end{ack}

\bibliography{main}
\bibliographystyle{plainnat}

%%%%%%%%%%%%%%%%%%%%%%%%%%%%%%%%%%%%%%%%%%%%%%%%%%%%%%%%%%%%

\appendix

\clearpage
\begin{center}
{\bf \Large{Appendix of ``Make Each Token Count: Towards Improving Long-Context Performance with KV Cache Eviction''}}
\end{center}

\DoToC

\clearpage

\section{Theoretical Results}\label{sec:proofs}

\subsection{Proofs for Attention Dilution}\label{sec:dilut_proof}

We now provide proofs for results in Section~\ref{sec:dilut_proof}.

\begin{proof}[Proof of Proposition~\ref{prop:tv_dilution}]
Let \(m=\max_{i\in U_t}z_{t,i}\). Then
\[
\sum_{i\in U_t}e^{z_{t,i}}\le |U_t|e^m,
\qquad
\sum_{j\in C_t\setminus U_t}e^{z_{t,j}}
\ge
\sum_{d\in D'_t}e^{z_{t,d}}
\ge
|D'_t|e^{m-\Delta}.
\]
Hence
\[
\sum_{i\in U_t}\alpha_{t,i}
=
\frac{\sum_{i\in U_t}e^{z_{t,i}}}
{\sum_{j\in C_t}e^{z_{t,j}}}
\le
\frac{|U_t|e^m}
{|U_t|e^m+|D'_t|e^{m-\Delta}}
=
\frac{1}{1+e^{-\Delta}|D'_t|/|U_t|}.
\]
Therefore,
\[
\delta_t
=
1-\sum_{i\in U_t}\alpha_{t,i}
\ge
\frac{e^{-\Delta}|D'_t|/|U_t|}
{1+e^{-\Delta}|D'_t|/|U_t|}.
\]
If \(\Delta=O(1)\), \(|U_t|=O(1)\), and \(|D'_t|\to\infty\), then
\(e^{-\Delta}|D'_t|/|U_t|\to\infty\), so \(\delta_t\to 1\) and
\(\sum_{i\in U_t}\alpha_{t,i}=1-\delta_t\to 0\).
\end{proof}

\begin{proof}[Proof of Proposition~\ref{prop:retention_reduces_dilution}]
By definition,
\[
\frac{\sum_{i\notin U_t}e^{z_{t,i}}}
{\sum_{i\in U_t}e^{z_{t,i}}}
=
\frac{\delta_t}{1-\delta_t}.
\]
Using the definitions of \(\rho_U\) and \(\rho_D\),
\[
\delta_t^r
=
\frac{\rho_D\sum_{i\notin U_t}e^{z_{t,i}}}
{\rho_U\sum_{i\in U_t}e^{z_{t,i}}
+
\rho_D\sum_{i\notin U_t}e^{z_{t,i}}}.
\]
Dividing numerator and denominator by
\(\rho_U\sum_{i\in U_t}e^{z_{t,i}}\) gives
\[
\delta_t^r
=
\frac{(\rho_D/\rho_U)\delta_t}
{(1-\delta_t)+(\rho_D/\rho_U)\delta_t}.
\]
If \(\rho_D\le \rho_U\), then \(\rho_D/\rho_U\le 1\), and since
\(a\mapsto a\delta_t/((1-\delta_t)+a\delta_t)\) is nondecreasing for
\(a\ge 0\),
\[
\delta_t^r
\le
\frac{\delta_t}{(1-\delta_t)+\delta_t}
=
\delta_t.
\]
Finally, if \(\rho_D/\rho_U\to 0\) and \(1-\delta_t>0\), then the displayed formula gives
\[
\delta_t^r\to 0.
\]
\end{proof}

\subsection{Geometric Retention as Query-Agnostic Future Utility}
\label{app:geom_retention}

We now justify the geometric retention form
\(r_{t,i}=\beta_i^{\,t-i}\) used by retention-gated attention. The central idea
is to view the usefulness of an old cached token as a persistence event. A token
should be retained if it is likely to remain useful, or close to useful, for
future query states. Under stable future query dynamics, this persistence
probability decays geometrically whenever the token has a uniformly positive
chance of leaving its useful region over a bounded time window.

We adopt a one-shot compression viewpoint. At decoding step \(t\), the existing
cache
\[
C_t=\{1,\dots,t\}
\]
is compressed to a smaller set of old tokens. Tokens generated after time \(t\)
are not part of this compression decision: they may be retained automatically or
handled by later compression rounds. Thus, when deciding whether an old token
\(i\le t\) should survive compression at time \(t\), we compare it only against
the other old cached tokens in \(C_t\).

For each old cached token \(i\le t\), define its old-cache future utility by
\begin{equation}
\label{eq:old_future_utility}
\bar G_i(t)
:=
\sum_{s=t+1}^T
w_{t,s}
\Pr(i\in U_s^{(t)}\mid \mathcal F_t),
\end{equation}
where \(w_{t,s}\ge 0\) are horizon weights and
\(U_s^{(t)}\subseteq C_t\) denotes the set of old cached tokens that
remain useful at future step \(s\). This quantity measures the expected future
benefit of retaining token \(i\) after compressing the old cache at time \(t\).

\begin{assumption}[Stable low-rank future query dynamics]
\label{asmp:query_markov}
For a fixed attention head and conditioning on \(\mathcal F_t\), suppose the
future query-side state evolves in a low-dimensional subspace:
\[
z_{s,i}
=
\mathbf x_s^\top(\mathbf W_Q^\top \mathbf W_K)\mathbf x_i
\approx
\mathbf r_s^\top \mathbf c_i+\varepsilon_{s,i},
\qquad
\mathbf r_s\in\mathbb R^m,\quad \mathbf c_i\in\mathbb R^m,
\]
where \(m\ll d\). The future process \((\mathbf r_s)_{s>t}\) is a stable,
time-homogeneous Markov chain on \(\mathbb R^m\) with transition kernel \(P\)
and invariant distribution \(\pi\). Stability means that the chain is
mean-reverting and ergodic, so that its distribution converges to \(\pi\) from
typical initial states.
\end{assumption}

\begin{figure}[H]
    \centering
    \includegraphics[width=\linewidth]{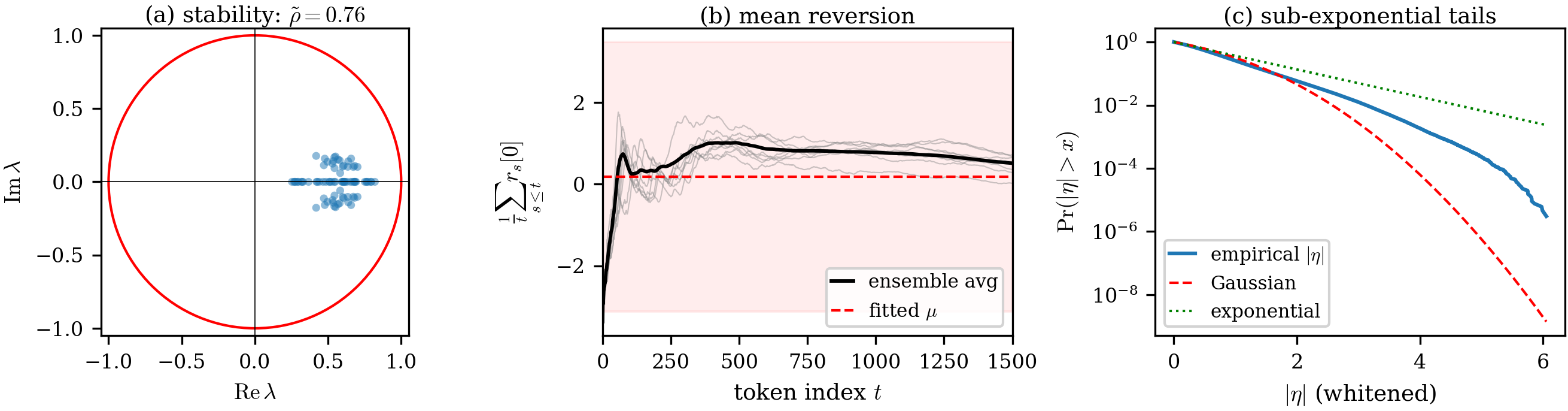}
    \caption{Empirical evidence for Assumption~\ref{asmp:query_markov}. We extract the per-token query state $\mathbf{q}_t$ at layer $20$, head $0$ of Qwen3-8B for $128$ rollouts each on $10$ AIME-24 prompts, project onto the top-$8$ PCA directions of the pooled training queries to obtain $\mathbf{r}_t\in\mathbb R^{8}$, and fit a per-context VAR(1) model $\mathbf{r}_{t+1}=A\mathbf{r}_t+\mathbf{b}+\boldsymbol{\eta}t$ on $75\%$ of rollouts. \textbf{(a) Stability.} Eigenvalues of $A$ across all $10$ contexts (each context contributes $8$ eigenvalues) lie strictly inside the unit circle; the median spectral radius is $\tilde\rho=0.76$ and every fitted context is stable, so the chain is mean-reverting. \textbf{(b) Mean reversion / ergodicity.} For one context, the per-rollout running mean $\frac{1}{t}\sum{s\leq t}r_s[0]$ (grey) and its ensemble average (black) converge to the fitted $\mu$ (red dashed) inside a $\pm\sigma$ stationary band; the time average matches the ensemble average, consistent with ergodicity. \textbf{(c) Sub-exponential innovations.} Pooled log-survival of the whitened innovation magnitude $|\eta|$ (blue) sits strictly between the Gaussian (red dashed) and exponential (green dotted) reference curves, satisfying the sub-exponential tail condition that drives the persistence bound.}
    \label{fig:placeholder}
\end{figure}

Under Assumption~\ref{asmp:query_markov}, future attention scores against old
cached tokens are determined, up to approximation error, by the future query
state \(\mathbf r_s\) and the fixed compatibility vectors
\(\{\mathbf c_j:j\le t\}\). Hence top-\(K\) membership among old cached tokens
can be described as a region in the future query-state space.

For a future query state \(\mathbf r\), define the hard old-cache top-\(K\)
threshold
\[
\Gamma_{K,t}(\mathbf r)
:=
K\text{-th largest value in }
\{\mathbf r^\top \mathbf c_j:j\in C_t\}.
\]
We say that token \(i\le t\) is immediately useful at query state \(\mathbf r\)
if it lies in the old-cache top-\(K\), namely if
\[
\mathbf r^\top \mathbf c_i
\ge
\Gamma_{K,t}(\mathbf r).
\]
Equivalently, the hard top-\(K\) useful region for token \(i\) is
\[
\mathcal T_{i,t}^{(K)}
:=
\left\{
\mathbf r\in\mathbb R^m:
\mathbf r^\top \mathbf c_i
\ge
\Gamma_{K,t}(\mathbf r)
\right\}.
\]
If \(\mathbf r\in\mathcal T_{i,t}^{(K)}\), then token \(i\) is among the
top-\(K\) old cached tokens for query state \(\mathbf r\).

However, immediate top-\(K\) membership is too strict as a retention criterion.
A token that is not currently in the top-\(K\) may still be worth retaining if
it remains close to the top-\(K\) boundary or if plausible future query states
can make it top-\(K\) again. Retention should therefore capture not only
instantaneous usefulness, but also future usefulness under small changes in the
query state.

To formalize this idea, we introduce a token-dependent relaxed threshold
\[
\widetilde{\Gamma}_{i,K,t}(\mathbf r)
\le
\Gamma_{K,t}(\mathbf r).
\]
Equivalently, we write
\[
\widetilde{\Gamma}_{i,K,t}(\mathbf r)
=
\Gamma_{K,t}(\mathbf r)
-
\Delta_{i,t}(\mathbf r),
\qquad
\Delta_{i,t}(\mathbf r)\ge 0,
\]
where \(\Delta_{i,t}\) is a token-dependent slack. The hard top-\(K\) rule is
recovered when \(\Delta_{i,t}\equiv 0\).

This slack has the following interpretation. Token \(i\) is useful in the hard
sense when it is in the old-cache top-\(K\). For retention, we lower the hard
threshold by the minimal amount needed to regard token \(i\) as still useful
along the future query trajectory we wish to preserve. Thus the relaxed
threshold can be viewed as the least permissive relaxation of the top-\(K\)
boundary under which token \(i\) remains retention-worthy.

Define the relaxed survival region
\[
\mathcal S_{i,t}^{(K)}
:=
\left\{
\mathbf r\in\mathbb R^m:
\mathbf r^\top \mathbf c_i
\ge
\widetilde{\Gamma}_{i,K,t}(\mathbf r)
\right\}.
\]
Since
\[
\widetilde{\Gamma}_{i,K,t}(\mathbf r)
\le
\Gamma_{K,t}(\mathbf r),
\]
we have
\[
\mathcal T_{i,t}^{(K)}
\subseteq
\mathcal S_{i,t}^{(K)}.
\]
Thus, every query state in which token \(i\) is hard top-\(K\) is also in its
relaxed survival region, while \(\mathcal S_{i,t}^{(K)}\) may additionally
include states where token \(i\) is not top-\(K\) but remains sufficiently
compatible with the query to justify retaining it. This definition separates immediate usefulness from
retention-worthiness: the hard top-\(K\) region identifies tokens that are
currently among the strongest old-cache competitors, while the relaxed survival
region identifies tokens that remain close enough to this condition to be worth
keeping. We thus have, for a fixed compression time \(t\) and a future step \(s>t\), define
\[
U_s^{(t)}
:=
\left\{
i\le t:
\mathbf r_u\in\mathcal S_{i,t}^{(K)}
\ \text{for all }u=t+1,\dots,s
\right\}.
\]

This persistence formulation is natural for monotone KV eviction. Once an old
token is removed during compression, it cannot re-enter the cache later. Hence a
retained token should not merely be useful for one future query; it should remain
within a region where it is likely to be useful over the intervening future query
trajectory.

\begin{figure}[t]
    \centering
    \includegraphics[width=0.7\linewidth]{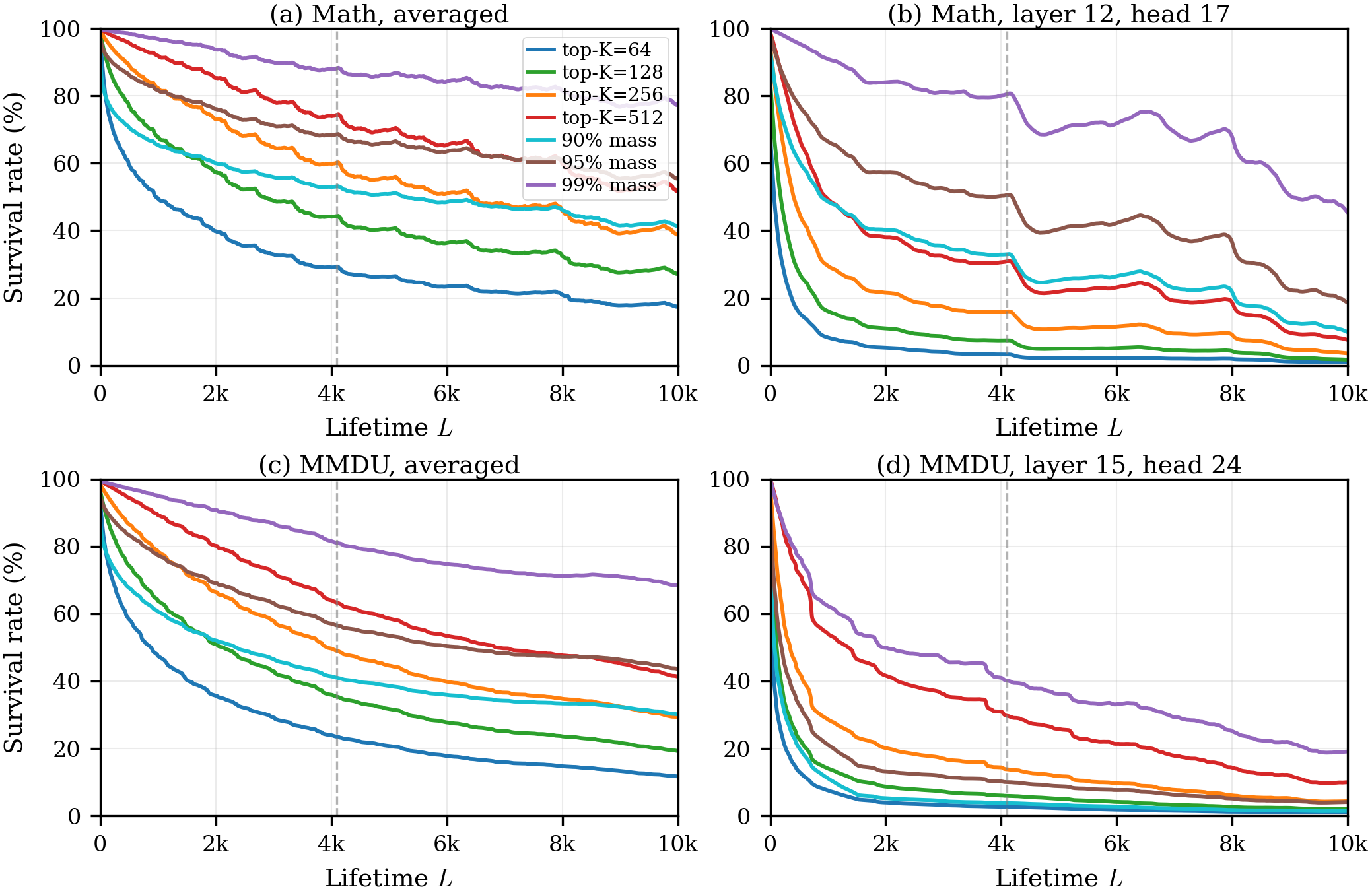}
    \caption{Token attention-survival under full-cache inference. \textbf{Top row}: Qwen3-4B on $100$ OpenR1-Math prefills up to $T = 16{,}384$ tokens. \textbf{Bottom row}: Qwen3-VL-4B on $98$ MMDU multimodal dialogues (text $+$ interleaved images), prefilled to the same length. \textbf{Left column}: survival rate averaged across all layers and query heads. \textbf{Right column}: survival rate for a single representative head (Math: layer $12$, head $17$; MMDU: layer $15$, head $24$). For each query position $p$ we identify the head's selected set of past tokens using either the top-$K$ logits ($K\in{64,128,256,512}$) or the smallest set whose softmax mass reaches a threshold $\tau\in{0.90, 0.95, 0.99}$; a token of birth time $i$ is then alive at horizon $L$ if it appears in the selected set of some query at position $p \ge i + L$. The dashed vertical line marks $L = 4096$. In every panel, every fixed top-$K$ curve drops sharply with $L$, while the more lenient mass criteria stay substantially higher: the gap between top-$K$ and $\tau$-mass curves is the attention-dilution mass — tokens that still carry meaningful aggregate weight but are pushed out of the head's hard top-$K$ budget. Heterogeneity at the head level is large: a representative head on either dataset loses the majority of tokens by $L = 4096$ even under the most lenient criterion ($99\%$ mass), in contrast to the across-head average. The same qualitative shape appears in the multimodal setting (rows c, d), confirming that attention dilution is not specific to text-only inputs.}
    \label{fig:app_survival}
\end{figure}

In autoregressive generation, token relevance is typically local and
query-dependent. Many cached tokens are useful only for a limited phase of the
generation: they may be highly compatible with the current query, but as the
query state drifts toward later semantic or syntactic contexts, they naturally
leave the region in which they are useful (see Figure~\ref{fig:app_survival}). This motivates the following block-exit
condition on the relaxed survival region. 

\begin{assumption}[Top-\(K\) block exit]
\label{asmp:relaxed_topk_block_exit}
Fix a compression time \(t\). For each old cached token \(i\le t\), there exist
an integer \(b_i\ge 1\) and a constant \(\epsilon_i>0\) such that
\[
\sup_{\mathbf r\in \mathcal S_{i,t}^{(K)}}
\Pr_{\mathbf r}\!\left(
\mathbf r_1,\dots,\mathbf r_{b_i}\in \mathcal S_{i,t}^{(K)}
\right)
\le
1-\epsilon_i .
\]
Here \(\Pr_{\mathbf r}\) denotes the law of the Markov chain initialized at
\(\mathbf r_0=\mathbf r\). Equivalently, whenever token \(i\) is currently
inside its relaxed survival region, there is probability at least
\(\epsilon_i\) that it exits this region within the next \(b_i\) future query
steps.
\end{assumption}

The block length \(b_i\) is allowed to
be token dependent. For tokens whose relevance is tied to the current local
query context, \(b_i\) may be small. For globally useful tokens, such as sink
tokens, delimiter tokens, or other persistent structural tokens, \(b_i\) may be
much larger, reflecting the slower decay of their usefulness.

\begin{theorem}[Exponential relaxed top-\(K\) persistence]
\label{thm:geom_relaxed_topk}
Suppose Assumptions~\ref{asmp:query_markov} and
\ref{asmp:relaxed_topk_block_exit} hold, and define
\(U_s^{(t)}\) through relaxed survival regions as above. Then, for
each old cached token \(i\le t\), there exist constants \(A_i>0\) and
\(\theta_i>0\) such that, for all \(n\ge 1\),
\[
\Pr(i\in U_{t+n}^{(t)}\mid\mathcal F_t)
\le
A_i e^{-\theta_i n}.
\]
In particular, one may take
\[
\theta_i
=
-\frac{1}{b_i}\log(1-\epsilon_i),
\qquad
A_i
=
(1-\epsilon_i)^{-1}.
\]
Equivalently, defining
\[
\beta_i
:=
e^{-\theta_i}
=
(1-\epsilon_i)^{1/b_i}
\in(0,1),
\]
the old-cache future utility in Eq.~\eqref{eq:old_future_utility} satisfies
\[
\bar G_i(t)
\le
A_i
\sum_{s=t+1}^T
w_{t,s}
\beta_i^{\,s-t}.
\]
\end{theorem}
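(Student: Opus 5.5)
The plan is a standard block-decomposition argument using the Markov property from Assumption~\ref{asmp:query_markov}. Condition on $\mathcal F_t$. The event $\{i\in U_{t+n}^{(t)}\}$ is exactly the persistence event $E_n:=\{\mathbf r_{t+1},\dots,\mathbf r_{t+n}\in\mathcal S_{i,t}^{(K)}\}$. Since the compression time $t$ and the token $i$ are fixed, $\mathcal S_{i,t}^{(K)}$ is a fixed (deterministic given $\mathcal F_t$) subset of $\mathbb R^m$. Write $n=kb_i+j$ with $k=\lfloor n/b_i\rfloor$ and $0\le j<b_i$. Then $E_n\subseteq E_{kb_i}$, so it suffices to bound $\Pr(E_{kb_i}\mid\mathcal F_t)\le(1-\epsilon_i)^k$.

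\textbf{Induction over blocks.} I will show $\Pr(E_{(k+1)b_i}\mid\mathcal F_t)\le(1-\epsilon_i)\Pr(E_{kb_i}\mid\mathcal F_t)$. Condition on the chain up to time $t+kb_i$. On $E_{kb_i}$ with $k\ge1$, the current state $\mathbf r_{t+kb_i}$ lies in $\mathcal S_{i,t}^{(K)}$. By the time-homogeneous Markov property, the conditional probability that the next $b_i$ states also stay in the region equals $\Pr_{\mathbf r_{t+kb_i}}(\mathbf r_1,\dots,\mathbf r_{b_i}\in\mathcal S_{i,t}^{(K)})$, which is at most $1-\epsilon_i$ by Assumption~\ref{asmp:relaxed_topk_block_exit}. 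Taking expectations over $E_{kb_i}$ gives the step.

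\textbf{Main obstacle.} The base block ($k=0$) is the subtle point. The initial state $\mathbf r_t$ need not lie in $\mathcal S_{i,t}^{(K)}$, so the block-exit assumption does not apply directly to the first block. The simplest fix is to discard the first block and use only the trivial bound $\Pr(E_{b_i})\le 1$. Starting the induction after the first visit then yields $\Pr(E_n)\le(1-\epsilon_i)^{k-1}$ at worst. Alternatively, one can condition at time $t+1$: on $E_n$ we have $\mathbf r_{t+1}\in\mathcal S$, and then $n-1$ further steps give $\lfloor (n-1)/b_i\rfloor$ full blocks.

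\textbf{Converting to exponential form.} I use $\lfloor (n-1)/b_i\rfloor\ge n/b_i-1$, which gives
\[
\Pr(E_n\mid\mathcal F_t)\le(1-\epsilon_i)^{n/b_i-1}=(1-\epsilon_i)^{-1}e^{-\theta_i n},
\]
with $\theta_i=-\tfrac{1}{b_i}\log(1-\epsilon_i)$ and $A_i=(1-\epsilon_i)^{-1}$, as stated. The inequality $\lfloor (n-1)/b_i\rfloor\ge n/b_i-1$ holds because $\lfloor x\rfloor > x-1$ and $(n-1)/b_i-1 \ge n/b_i - 1 - 1/b_i$. If this loses a factor, I will instead argue via the first-visit route, which gives exactly $k\ge n/b_i-1$ full blocks. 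In either case the point needing care is the bookkeeping of at most one incomplete block, which is absorbed into $A_i$.

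\textbf{Utility bound.} Since $w_{t,s}\ge0$, multiply the bound at $n=s-t$ by $w_{t,s}$ and sum over $s=t+1,\dots,T$. With $\beta_i=e^{-\theta_i}$ this gives $\bar G_i(t)\le A_i\sum_s w_{t,s}\beta_i^{s-t}$. The approximation error $\varepsilon_{s,i}$ plays no role, because $U_s^{(t)}$ is defined directly through $\mathbf r_u$. Stability and ergodicity are not needed beyond the Markov property.
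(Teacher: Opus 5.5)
Your proposal is correct and follows the paper's proof: the same decomposition $n=qb_i+r$, the Markov property at block boundaries combined with Assumption~\ref{asmp:relaxed_topk_block_exit}, the conversion of $q\ge n/b_i-1$ into $A_i e^{-\theta_i n}$, and summation against $w_{t,s}\ge 0$.

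The one place you differ works in your favor. The paper also applies the block-exit bound to the first block $\mathbf r_{t+1},\dots,\mathbf r_{t+b_i}$, which would require $\mathbf r_t\in\mathcal S_{i,t}^{(K)}$. Its intermediate claim $\Pr(S_{qb_i}\mid\mathcal F_t)\le(1-\epsilon_i)^q$ is therefore not covered by the assumption. For example, take $b_i=2$ and a deterministic chain that moves from $\mathbf r_t\notin\mathcal S_{i,t}^{(K)}$ into the region, stays one more step, and then leaves for good. It satisfies the assumption for any $\epsilon_i$, yet $\Pr(S_2\mid\mathcal F_t)=1$. Your anchoring at $t+1$ invokes the assumption only from states already known to lie in the region, and it still yields exactly $A_i=(1-\epsilon_i)^{-1}$.

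Two bookkeeping remarks. First, the inequality $\lfloor (n-1)/b_i\rfloor\ge n/b_i-1$ is true, but the reason you give only yields $\lfloor (n-1)/b_i\rfloor> n/b_i-1-1/b_i$. The clean justification is the identity $\lfloor (n-1)/b_i\rfloor=\lceil n/b_i\rceil-1$ for positive integers, or equivalently an integrality argument on $b_i\lfloor (n-1)/b_i\rfloor$. Second, the ``discard the first block'' variant only gives $(1-\epsilon_i)^{\lfloor n/b_i\rfloor-1}$, i.e.\ $A_i=(1-\epsilon_i)^{-2}$. The $t+1$-anchored version is the one that delivers the stated constants.
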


\begin{proof}
Let
\[
S_n
:=
\left\{
\mathbf r_{t+1},\dots,\mathbf r_{t+n}
\in
\mathcal S_{i,t}^{(K)}
\right\}
\]
be the event that token \(i\)'s future query trajectory remains inside its
relaxed survival region for \(n\) consecutive future steps. By the definition of
\(U_s^{(t)}\),
\[
\Pr(i\in U_{t+n}^{(t)}\mid\mathcal F_t)
\leq
\Pr(S_n\mid\mathcal F_t).
\]

Write \(n=q b_i+r\), where \(q=\lfloor n/b_i\rfloor\) and
\(0\le r<b_i\). Applying the Markov property at the beginning of each block and
using Assumption~\ref{asmp:relaxed_topk_block_exit} gives
\[
\Pr(S_n\mid\mathcal F_t)
\le
\Pr(S_{q b_i}\mid\mathcal F_t)
\le
(1-\epsilon_i)^q.
\]
Since \(q\ge n/b_i-1\),
\[
(1-\epsilon_i)^q
\le
(1-\epsilon_i)^{-1}
\exp\!\left(
-\frac{-\log(1-\epsilon_i)}{b_i}n
\right).
\]
Therefore,
\[
\Pr(S_n\mid\mathcal F_t)
\le
A_i e^{-\theta_i n},
\]
with
\[
A_i=(1-\epsilon_i)^{-1},
\qquad
\theta_i=-\frac{1}{b_i}\log(1-\epsilon_i).
\]
The bound on \(\bar G_i(t)\) follows by substituting this
persistence bound into Eq.~\eqref{eq:old_future_utility}.
\end{proof}

Theorem~\ref{thm:geom_relaxed_topk} gives a probabilistic interpretation of
geometric retention. The survival parameter
\[
\beta_i=(1-\epsilon_i)^{1/b_i}
\]
summarizes how persistent token \(i\)'s future usefulness is expected to be.
Tokens that quickly exit their relaxed top-\(K\) survival regions have smaller
\(\beta_i\), while tokens that remain compatible with many future query states
have \(\beta_i\) closer to one.

In practice, the block-exit parameters \((b_i,\epsilon_i)\) are not estimated
explicitly. Instead, the retention gate learns a direct prediction of
\(\beta_i\) from the token representation, or equivalently from features that
determine its compatibility vector \(\mathbf c_i\). Thus, \(\beta_i\) can be
viewed as a learned summary of the geometry of the relaxed top-\(K\) survival
region \(\mathcal S_{i,t}^{(K)}\) under the future query dynamics.

\section{Experiments}
\label{appendix:section:experiment}
\subsection{General Experiment Settings}
\label{appendix:section:subsection:additional_experiment_settings}
We train with a maximum sequence length of 32,768, using a learning rate of $2\times10^{-4}$, weight decay $10^{-6}$, and cosine scheduling. The per-GPU batch size is 1 with gradient accumulation set to 4; all other hyperparameters follow the default HuggingFace Trainer settings. For the training datasets, we randomly sample 40\% of M4Instruct~\citep{li2024llava}, 30\% of Academic Video~\citep{zhang2024videoinstructiontuningsynthetic}, 30\% of R1-OneVision~\citep{yang2025r1}, and 20\% of OpenR1-Math-220k~\citep{openr1math2025}.  The training is performed on 4 H100 GPUs.

\subsection{Baselines}
\begin{itemize}
    \item SnapKV~\citep{li2024snapkv} is an attention-score-based eviction heuristic, and R-KV extends this approach to reasoning models by identifying redundant tokens via key-vector similarity. 
    \item AdaKV~\citep{feng2024ada} builds on SnapKV with dynamic, head-wise budget allocation driven by attention statistics. 
    \item R-KV~\citep{cai2025r} is a training-free compression method that optimizes the KV cache of reasoning models by jointly scoring tokens based on attention importance and semantic redundancy. 
    \item TrimKV is our extension of the token retention strategy from~\citep{bui2025cache} to vision-language models.
\end{itemize}

\subsection{Needle in a Haystack Experiment Settings}\label{appendix:section:subsection:needle_in_a_haystack}

To demonstrate how DBTrimKV mitigates attention dilution by evicting distractors, we follow the Needle-in-a-Haystack (NIAH) experimental setup~\citep{hsieh2024ruler}. Because most recent LLMs have achieved near-perfect accuracy on single-NIAH tasks, we increase the difficulty by combining MK-NIAH, MV-NIAH, and MQ-NIAH into a single setting. We generate a dataset of 5,117 training samples with varying context sizes (from 4k to 200k) and randomized numbers of keys, values, and queries. For testing, we hold out 200 samples where the numbers of keys, values, and queries are fixed to 8, 4, and 4, respectively. Using Qwen3-4B-Instruct-2507 (which supports a 256k context window) as our base model, we evaluate two trainable KV eviction methods: TrimKV and DBTrimKV. Similar to the main experiments, we freeze the base model and train only the retention head.

\section{Additional Experimental Results}

\subsection{MMDU}

Table~\ref{tab:complete_mmdu} reports the complete set of LLM-as-a-judge metrics in MMDU~\citep{liu2024mmdu}, serving as a detailed extension of Table~\ref{tab:mmdu}. 

The extended metrics reveal several critical trends regarding model performance under extreme sequence compression. First, DBTrimKV consistently outperforms not only the competing eviction methods but also the Vanilla full-cache baseline across all tested KV budgets (512, 256, and 128). Notably, even under the extreme constraint of a 128-token budget, DBTrimKV maintains an overall performance of 104.10\% relative to the Vanilla baseline, whereas existing methods like SnapKV, R-KV, and AdaKV suffer severe degradation, dropping to 60--64\% relative performance.

Furthermore, the metric breakdown highlights the specific failure modes of traditional eviction strategies. As the budget decreases, baseline methods exhibit sharp declines in multimodal-specific metrics such as ``Visual Perception'' and ``Image Relative,'' indicating a failure to retain visually grounded context. In contrast, both TrimKV and DBTrimKV preserve these critical tokens, allowing DBTrimKV to actually exceed Vanilla performance in visual perception (3.54 vs. 3.40) and logical coherence (4.11 vs. 3.90) at the lowest 128 budget.

\begin{table*}[h!]
  \centering
  \resizebox{\textwidth}{!}{
    \begin{tabular}{llcccccccc}
      \toprule
      \textbf{Budget} & \textbf{Method} 
      & \textbf{Answer Accuracy} & \textbf{Creativity} & \textbf{Richness} 
      & \textbf{Visual Perception} & \textbf{Logical Coherence} & \textbf{Image Relative} & \textbf{Overall} & \textbf{\% vs Vanilla} \\
      \midrule

      % ===================== VANILLA =====================
      \rowcolor{gray!20} \cellcolor{white}
      & Vanilla 
      & 3.37 & 3.68 & 4.21 & 3.40 & 3.90 & 3.40 & 3.57 & 100 \\
      \midrule

      % ===================== 512 =====================
      \multirow{5}{*}{512}
        & SnapKV   & 3.07 & \textcolor{TinaCrimson}{\underline{4.13}} & \textcolor{TinaCrimson}{\textbf{4.74}} & 2.78 & 3.88 & 3.05 & 3.46 & 96.70 \\
        & R-KV     & 2.76 & 3.32 & 3.77 & 2.58 & 3.43 & 2.78 & 3.00 & 84.00 \\
        & AdaKV    & 2.89 & \textcolor{TinaCrimson}{3.79} & \textcolor{TinaCrimson}{4.32} & 2.68 & 3.68 & 2.93 & 3.24 & 90.79 \\
        \cellcolor{white}
        & \trimkv   & \textcolor{TinaCrimson}{\underline{3.46}} & \textcolor{TinaCrimson}{3.88} & \textcolor{TinaCrimson}{4.25} & \textcolor{TinaCrimson}{\underline{3.51}} & \textcolor{TinaCrimson}{\underline{4.00}} & \textcolor{TinaCrimson}{\underline{3.64}} & \textcolor{TinaCrimson}{\underline{3.73}} & \textcolor{TinaCrimson}{\textbf{104.51}} \\
      \rowcolor{\ourclr!20} \cellcolor{white}
        & \dbtrimkv & \textcolor{TinaCrimson}{\textbf{3.79}} & \textcolor{TinaCrimson}{\textbf{4.25}} & \textcolor{TinaCrimson}{\underline{4.69}} & \textcolor{TinaCrimson}{\textbf{3.88}} & \textcolor{TinaCrimson}{\textbf{4.38}} & \textcolor{TinaCrimson}{\textbf{3.91}} & \textcolor{TinaCrimson}{\textbf{4.09}} & \textcolor{TinaCrimson}{\textbf{114.46}} \\
      \midrule

      % ===================== 256 =====================
      \multirow{5}{*}{256}
        & SnapKV   & 2.54 & 3.44 & 4.20 & 2.23 & 3.32 & 2.40 & 2.88 & 80.51 \\
        & R-KV     & 2.42 & 2.99 & 3.58 & 2.22 & 3.10 & 2.36 & 2.66 & 74.45 \\
        & AdaKV    & 2.56 & 3.41 & 4.16 & 2.25 & 3.31 & 2.45 & 2.88 & 80.68 \\
         \cellcolor{white}
        & \trimkv   & \underline{3.16} & \textcolor{TinaCrimson}{\underline{3.98}} & \textcolor{TinaCrimson}{\underline{4.25}} & \underline{3.35} & \textcolor{TinaCrimson}{\underline{3.97}} & \textcolor{TinaCrimson}{\underline{3.63}} & \underline{3.56} & 99.65 \\
      \rowcolor{\ourclr!20} \cellcolor{white}
        & \dbtrimkv & \textcolor{TinaCrimson}{\textbf{3.64}} & \textcolor{TinaCrimson}{\textbf{4.09}} & \textcolor{TinaCrimson}{\textbf{4.54}} & \textcolor{TinaCrimson}{\textbf{3.74}} & \textcolor{TinaCrimson}{\textbf{4.18}} & \textcolor{TinaCrimson}{\textbf{3.73}} & \textcolor{TinaCrimson}{\textbf{3.87}} & \textcolor{TinaCrimson}{\textbf{108.33}} \\
      \midrule

      % ===================== 128 =====================
      \multirow{5}{*}{128}
        & SnapKV   & 2.01 & 2.59 & 3.31 & 1.72 & 2.85 & 1.85 & 2.29 & 64.00 \\
        & R-KV     & 2.02 & 2.21 & 2.63 & 1.93 & 2.71 & 2.06 & 2.17 & 60.77 \\
        & AdaKV    & 2.03 & 2.54 & 3.21 & 1.78 & 2.83 & 1.92 & 2.25 & 62.98 \\
       \cellcolor{white}
        & \trimkv   & \underline{3.05} & \textcolor{TinaCrimson}{\underline{3.85}} & \underline{4.11} & \underline{3.03} & \underline{3.88} & \textcolor{TinaCrimson}{\underline{3.41}} & \underline{3.35} & 93.61 \\
      \rowcolor{\ourclr!20} \cellcolor{white}
        & \dbtrimkv & \textcolor{TinaCrimson}{\textbf{3.43}} & \textcolor{TinaCrimson}{\textbf{3.93}} & \textcolor{TinaCrimson}{\textbf{4.31}} & \textcolor{TinaCrimson}{\textbf{3.54}} & \textcolor{TinaCrimson}{\textbf{4.11}} & \textcolor{TinaCrimson}{\textbf{3.57}} & \textcolor{TinaCrimson}{\textbf{3.72}} & \textcolor{TinaCrimson}{\textbf{104.10}} \\
      \bottomrule
    \end{tabular}
  }
  \caption{Comparison of different methods across various metrics under varying token budgets. The best eviction methods are shown in \textbf{bold}, and the second-best are \underline{underlined}. Performances that exceed vanilla inference are highlighted in \textcolor{TinaCrimson}{\textbf{red}}. The superscript \textsuperscript{\orgfire} denotes a trainable KV eviction method.}
  \label{tab:complete_mmdu}
\end{table*}

\subsection{Long Context Benchmarks}
\label{appendix:section:subsection:long_context_benchmarks}

\subsubsection{LongBench-V2}
In this section, we compare our method against Full KV Cache, TrimKV, and Locret~\citep{huang2025locret} on LongBenchv2 benchmark~\citep{bai2024longbench2}. We follow the chunk-prefill settings from~\citep{huang2025locret}. Base model is \texttt{microsoft/Phi-3-mini-128k-instruct}, locret model is from \texttt{hyx21/Locret-phi-3-mini-128K}. LongBench-v2 evaluates deep reasoning across realistic long-context scenarios, such as multi-document QA and code repository comprehension. Locret is a lightweight, training-based eviction method that employs learnable retaining heads to predict the causal importance of tokens. As shown in Table~\ref{tab:longbenchv2}, DBTrimKV achieves 9.20\% average accuracy improvement over the Full KV baseline, whereas both Locret and TrimKV experience performance degradation.

\begin{table}[h!]
\centering
\scriptsize
% \resizebox{\linewidth}{!}{%
\begin{tabular}{lrrrrrr}
\toprule
Method & Acc. Short & Acc. Easy & Acc. Medium & Acc. Hard & Avg. Acc & Avg. $\Delta$ (\%) \\
\midrule
\rowcolor{gray!20}
Full KV & \underline{33.71} & \underline{34.44} & 18.60 & 25.86 & \underline{28.79} & \underline{0.00} \\
LocRet & 30.34 & 30.00 & \textcolor{TinaCrimson}{\textbf{24.42}} & \textcolor{TinaCrimson}{\underline{27.59}} & 28.41 & -1.32 \\
TRIM-KV & 29.21 & 31.11 & \textcolor{TinaCrimson}{\underline{19.77}} & 23.56 & 26.13 & -9.24 \\
\rowcolor{\ourclr!20}
DBTrimKV & \textcolor{TinaCrimson}{\textbf{37.08}} & \textcolor{TinaCrimson}{\textbf{36.67}} & \textcolor{TinaCrimson}{\underline{19.77}} & \textcolor{TinaCrimson}{\textbf{28.74}} & \textcolor{TinaCrimson}{\textbf{31.44}} & \textcolor{TinaCrimson}{\textbf{+9.20}} \\
\bottomrule
\end{tabular}
% }
\vspace{2mm}
\caption{Performance on long-context tasks of KV eviction methods with Phi3-mini-128K on the LongBench-V2 benchmark, including average relative change (Avg. $\Delta$) compared to Full KV. KV Budget is set to 1024.}
\label{tab:longbenchv2}
\end{table}

\subsection{Ablation Study on Lookahead Step (T-t)}
During inference, we use a default lookahead horizon $T-t =2$ (as described in Section~\ref{appendix:section:subsection:additional_experiment_settings}) for every eviction time $t$. We now ablate the effect of this lookahead parameter on the MathVision\textsubscript{mini} benchmark across various KV budgets. As shown in Table~\ref{tab:abl_look_ahead_step}, a lookahead step of $T-t=2$ generally yields the highest accuracy for moderate to high KV budgets (256, 512, and 1024). Reducing the lookahead to a myopic one-step horizon ($T-t=1$) consistently degrades performance across these budgets. Conversely, extending the lookahead horizon to $T-t=5$ harms performance at higher budgets, such as dropping from 52.63\% to 46.05\% at a budget of 1024. However, under extremely constrained memory budgets (64 and 128), the longer lookahead horizon of $T-t=5$ becomes beneficial, outperforming both $T-t=1$ and $T-t=2$. This indicates that a moderate lookahead optimally balances recent context with long-term retention under typical memory conditions, whereas a longer lookahead helps prioritize the most globally persistent tokens when cache capacity is severely limited.

\begin{table}[t]
    \centering
    \footnotesize
    \resizebox{0.55\linewidth}{!}{
    \begin{tabular}{lccccc}
        \toprule
        Lookahead/Budget & 64 & 128 & 256 & 512 & 1024 \\
        \midrule
        \dbtrimkv - LHS $T-t=1$ & 34.54 & 46.43 & 49.34 & 50.33 & 50.66 \\
        \dbtrimkv - LHS $T-t=2$ & 33.88 & 47.04 & \textbf{51.64} & \textbf{51.97} & \textbf{52.63} \\
        \dbtrimkv - LHS $T-t=5$ & \textbf{38.49} & \textbf{47.73} & 47.70 & 50.00 & 46.05 \\
        \bottomrule
    \end{tabular}
    }
    \caption{Lookahead-step $T-t$ ablation on MathVision\textsubscript{mini}. Here, $t$ is the compression time.}
    \label{tab:abl_look_ahead_step}
\end{table}
\subsection{Efficiency Scaling} \label{appendix:section:efficiency_scaling}
To evaluate inference efficiency, we benchmark in a synthetic setting using dummy input sequences of varying context lengths, populated with filler tokens and an all-ones attention mask. We compare three methods: Vanilla (Full Cache), our proposed DBTrimKV, and TrimKV. Efficiency is evaluated by measuring total decoding time (seconds) and throughput (generated tokens per second). Timing is recorded during the generation phase and averaged over multiple steps following an initial warmup period. Results are ported in Figure~\ref{fig:context_length},~\ref{fig:gen_length},~\ref{fig:scaling_kv_budget}.

Across all scaling dimensions, our method, DBTrimKV, demonstrates significant efficiency improvements over the Vanilla baseline. As context length grows, the Vanilla model suffers from linearly increasing decoding times and decreasing throughput, whereas DBTrimKV maintains a consistent, flat efficiency profile. When scaling generation length, Vanilla exhibits a severe, quadratic increase in decoding time, whereas DBTrimKV scales much more efficiently, maintaining significantly higher throughput and lower decoding times even at extended lengths. Finally, across varying KV budgets, DBTrimKV consistently retains its efficiency advantage over the static Full Cache baseline, with throughput and decoding time remaining largely stable regardless of the specific budget allocation.

% Figure 1: Context Length
\begin{figure}[htbp]
    \centering
    \begin{subfigure}[b]{0.48\textwidth}
        \centering
        \includegraphics[width=\textwidth]{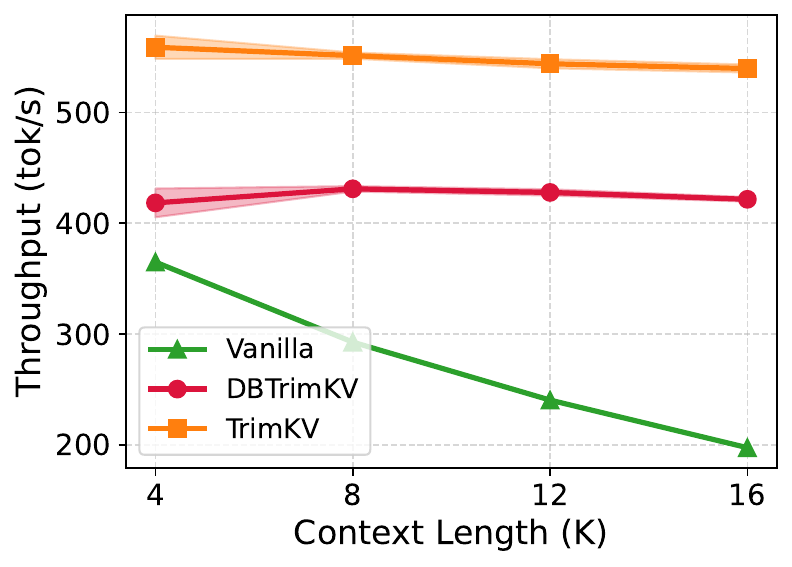}
        \label{fig:scaling_ctx_throughput}
    \end{subfigure}
    \hfill
    \begin{subfigure}[b]{0.48\textwidth}
        \centering
        \includegraphics[width=\textwidth]{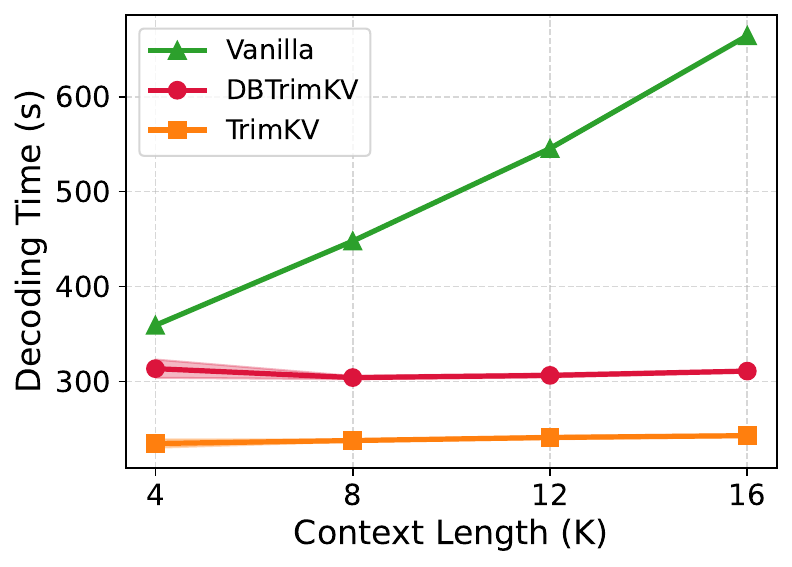}
        \label{fig:scaling_ctx_decoding}
    \end{subfigure}
    \caption{Efficiency scaling with context length. The figure reports throughput in tokens per second (left) and total decoding time in seconds (right) as the input context length increases. Evaluated methods include Vanilla (Full Cache), DBTrimKV, and TrimKV. We fix batch size at 16, number of generated tokens at 8192 and KV budget at 256. Results are averaged over 5 random seeds.} 
    \label{fig:context_length}
\end{figure}

% Figure 2: Generation Length
\begin{figure}[htbp]
    \centering
    \begin{subfigure}[b]{0.48\textwidth}
        \centering
        \includegraphics[width=\textwidth]{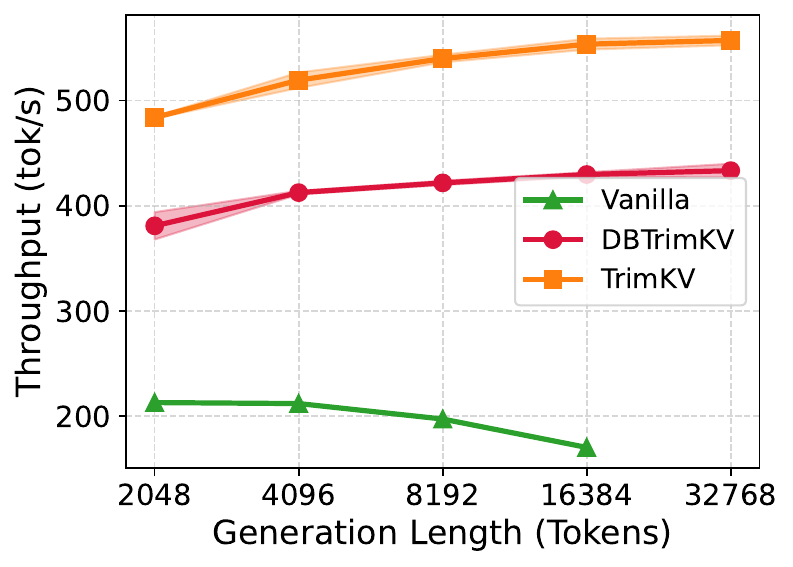}
        \label{fig:scaling_gen_throughput}
    \end{subfigure}
    \hfill
    \begin{subfigure}[b]{0.48\textwidth}
        \centering
        \includegraphics[width=\textwidth]{figures/gen_length_decoding_time.pdf}
        \label{fig:scaling_gen_decoding}
    \end{subfigure}
    \caption{Efficiency scaling with generation length. The figure reports throughput in tokens per second (left) and total decoding time in seconds (right) as the number of generated tokens increases. Evaluated methods include Vanilla (Full Cache), DBTrimKV, and TrimKV. We fix batch size at 16, context length at 16384 and KV budget at 256. We do not report vanilla performance for generation length at 32k due to OOM error. Results are averaged over 5 random seeds.} 
    \label{fig:gen_length}
\end{figure}

% Figure 3: KV Budget
\begin{figure}[htbp]
    \centering
    \begin{subfigure}[b]{0.48\textwidth}
        \centering
        \includegraphics[width=\textwidth]{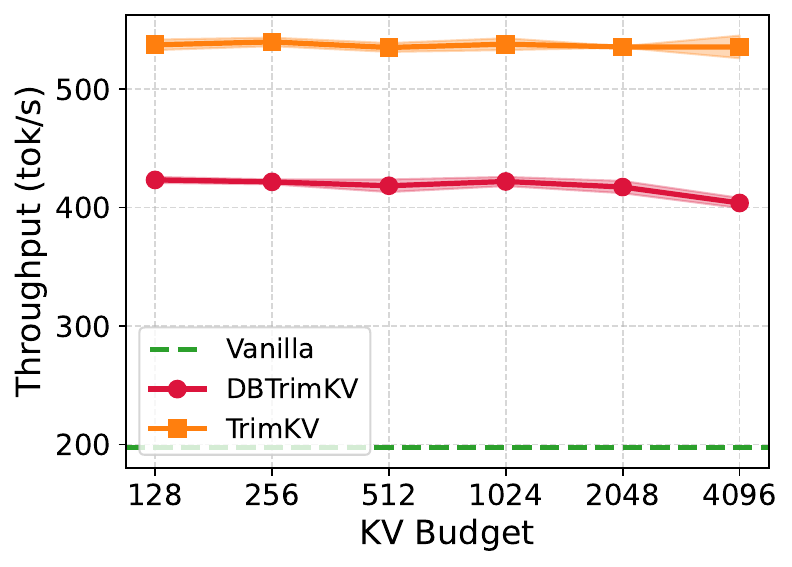}
        \label{fig:scaling_kv_throughput}
    \end{subfigure}
    \hfill
    \begin{subfigure}[b]{0.48\textwidth}
        \centering
        \includegraphics[width=\textwidth]{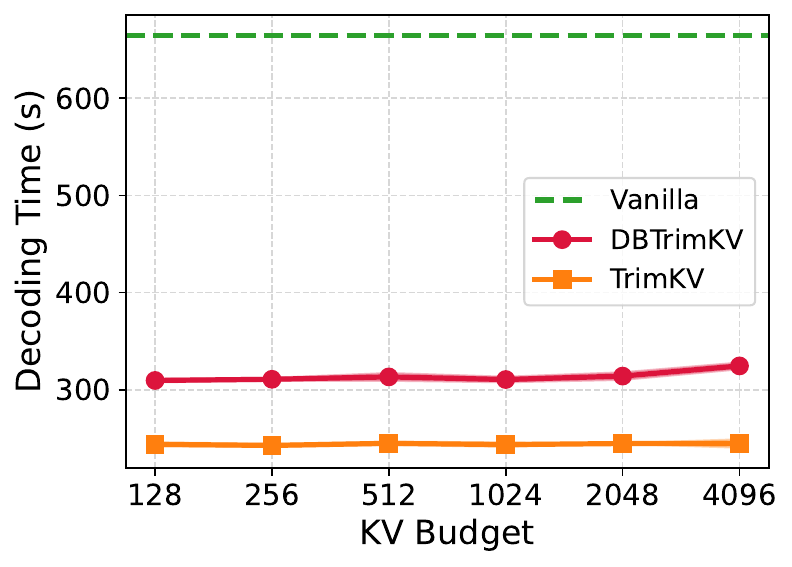}
        \label{fig:kv_decoding}
    \end{subfigure}
    \caption{Efficiency scaling with KV budget. The figure reports throughput in tokens per second (left) and total decoding time in seconds (right) across different allocated KV memory budgets. Evaluated methods include Vanilla (Full Cache), DBTrimKV, and TrimKV. We fix batch size at 16, number of generated tokens at 8192 and number of generated tokens at 8192. Results are averaged over 5 random seeds.} 
    \label{fig:scaling_kv_budget}
\end{figure}

\subsection{Visualization of DBTrimKV}
\label{appendix:section:visualization}
In this section, we visualize the visual tokens retained during inference to illustrate the model's selection behavior. As shown in Figures~\ref{fig:visual_token_visualization_mmmu_pro_test_2} and \ref{fig:visual_token_visualization_mmmu_pro_test_61}, DBTrimKV initially preserves a high density of visual tokens during the early stages of generation to establish structural and semantic understanding (e.g., Figure~\ref{fig:visual_token_visualization_mmmu_pro_test_2:step_15}). 

As decoding progresses, DBTrimKV aggressively evicts background distractors, narrowing its focus strictly to the most informative image regions necessary for the ongoing reasoning task (e.g., Figure~\ref{fig:visual_token_visualization_mmmu_pro_test_2:last_step}). Notably, the retained visual tokens align directly with the model's textual output. In Figure~\ref{fig:visual_token_visualization_mmmu_pro_test_2}, retention is heavily concentrated on the scythe, the clock, and the specific "RATHER TIME" text banner, which the model explicitly relies upon to deduce the final answer. Similarly, in Figure~\ref{fig:visual_token_visualization_mmmu_pro_test_61}, retention isolates the necessary coordinate data points. This dynamic focusing empirically validates that our learned retention mechanism successfully mitigates attention dilution by preserving only tokens with long-term utility while discarding irrelevant visual context. This trend also aligns with the layer-wise attention patterns visualized in Figure~\ref{fig:kv_cache_usage}, where a massive initial KV cache is gradually pruned as generation continues.

\begin{figure*}[htbp]
    \centering
    \begin{subfigure}[b]{0.48\textwidth}
        \centering
        \includegraphics[width=\textwidth]{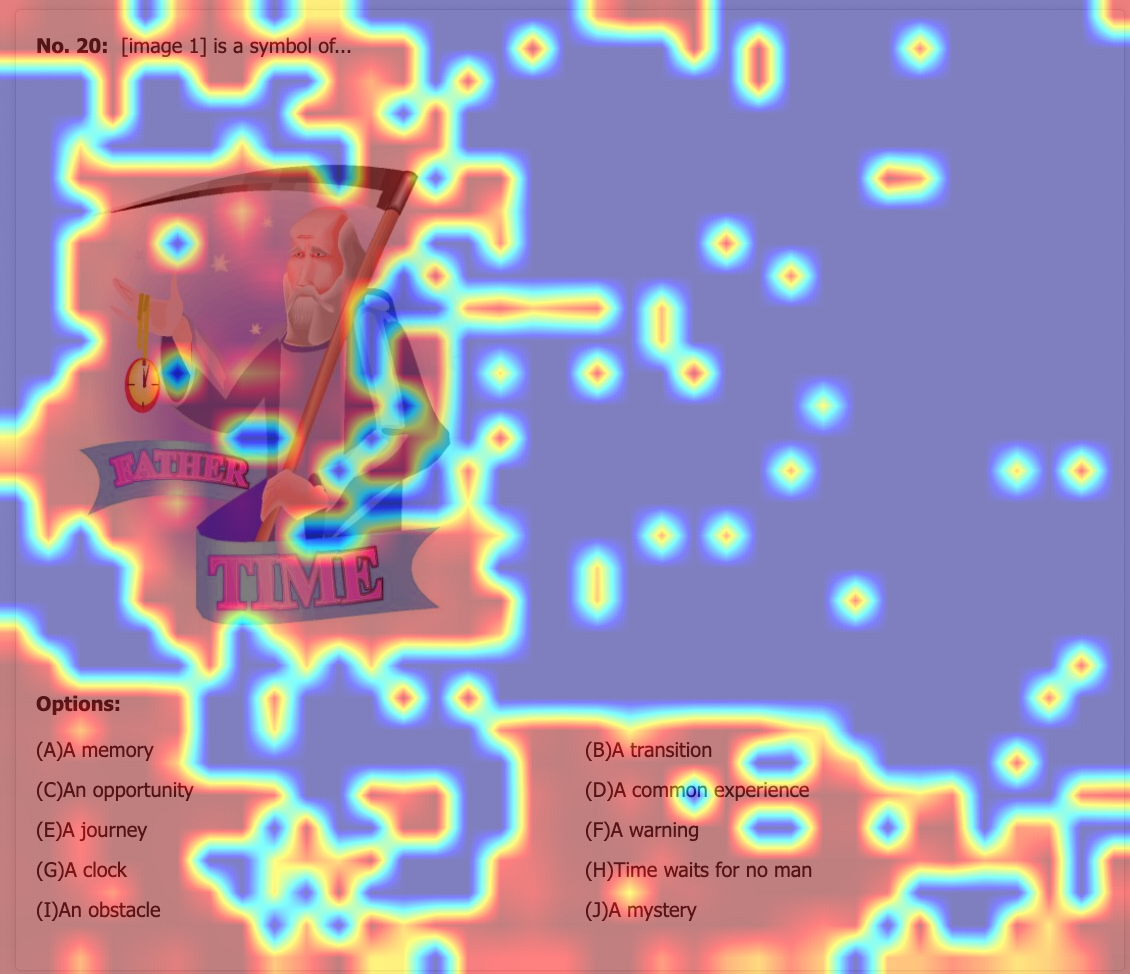}
        \caption{Generation step 15}
        \label{fig:visual_token_visualization_mmmu_pro_test_2:step_15}
    \end{subfigure}
    \hfill
    \begin{subfigure}[b]{0.48\textwidth}
        \centering
        \includegraphics[width=\textwidth]{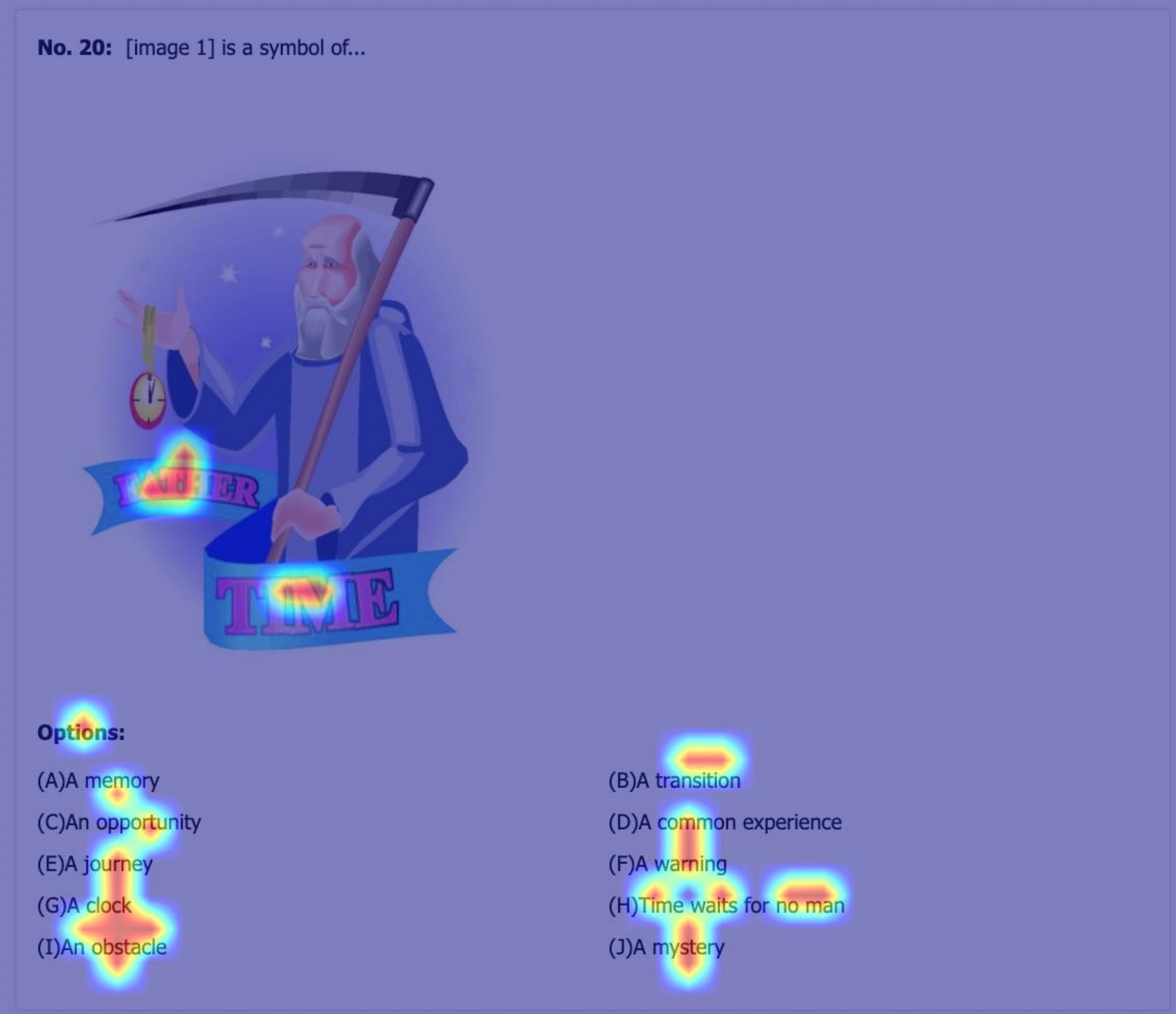}
        \caption{Last generation step}
        \label{fig:visual_token_visualization_mmmu_pro_test_2:last_step}
    \end{subfigure}
    
    \vspace{1em}
    
    \begin{subfigure}[b]{\textwidth}
        \centering
        \fbox{\parbox{0.95\textwidth}{\small
        let's look at the image. The character is holding a scythe (which is usually associated with Death) but also a clock, and the banner says "RATHER TIME". Wait, the options include (H) Time waits for no man. That's a common saying. Let me check the options again. The image has "RATHER TIME" which might be a play on "Time waits for no man". So the symbol here is representing that phrase. So the answer should be H.
        }}
        \caption{Model reasoning}
        \label{fig:visual_token_visualization_mmmu_pro_test_2:reasoning}
    \end{subfigure}
    \caption{Visualization of retained visual tokens for a sample in the MMMU-Vision-Pro dataset (Layer 13, Head 2). While the initial step (a) retains broad context, the final step (b) aggressively prunes background noise to isolate the scythe, the clock, and the ``RATHER TIME'' banner. This precise visual retention directly grounds the model's textual deduction (c) that the symbol represents the phrase ``Time waits for no man'' (Option H).}
    \label{fig:visual_token_visualization_mmmu_pro_test_2}
\end{figure*}

\begin{figure*}[htbp]
    \centering
    \begin{subfigure}[b]{0.48\textwidth}
        \centering
        \includegraphics[width=\textwidth]{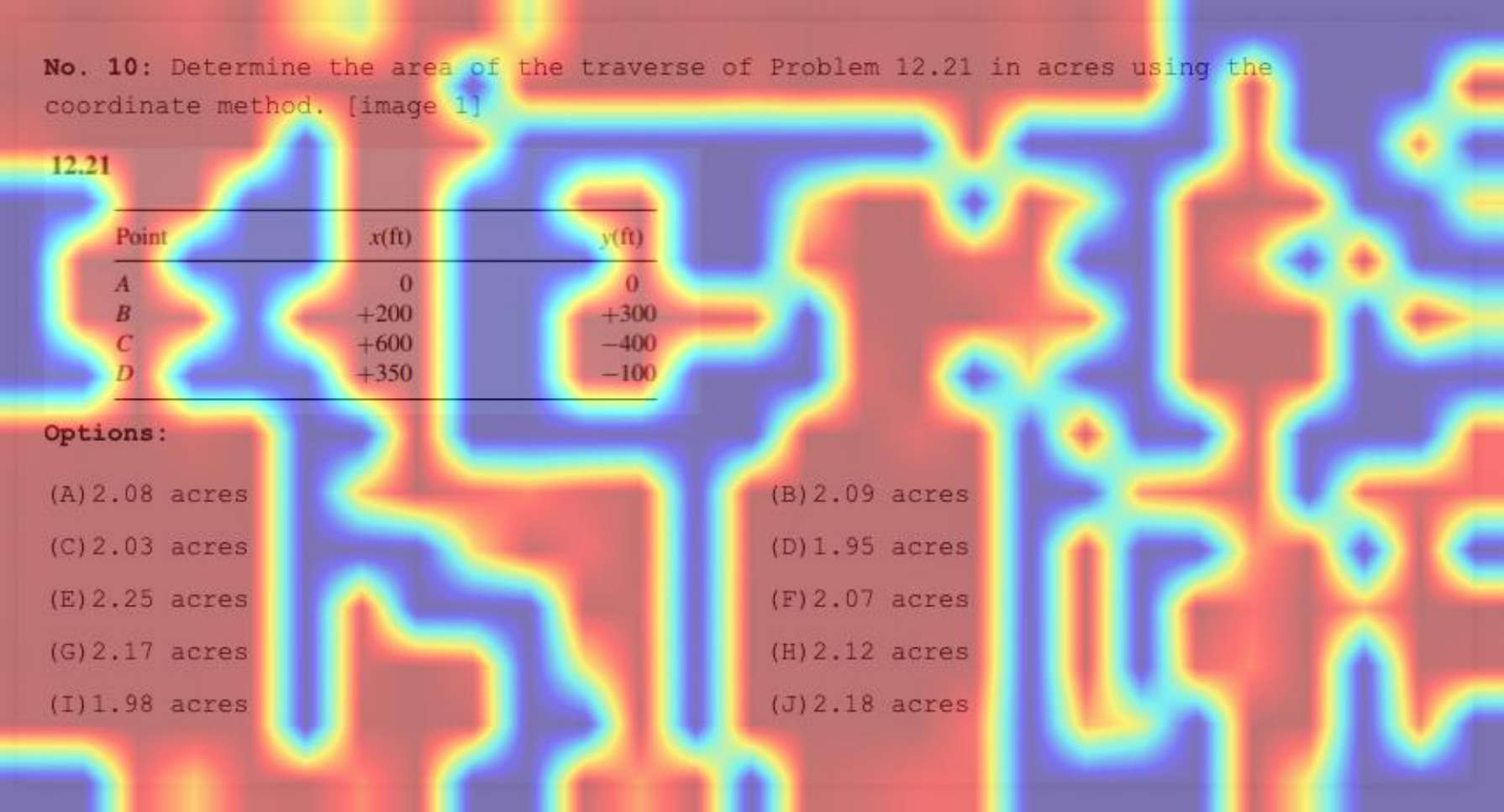}
        \caption{Generation step 15}
        \label{fig:visual_token_visualization_mmmu_pro_test_61:step_15}
    \end{subfigure}
    \hfill
    \begin{subfigure}[b]{0.48\textwidth}
        \centering
        \includegraphics[width=\textwidth]{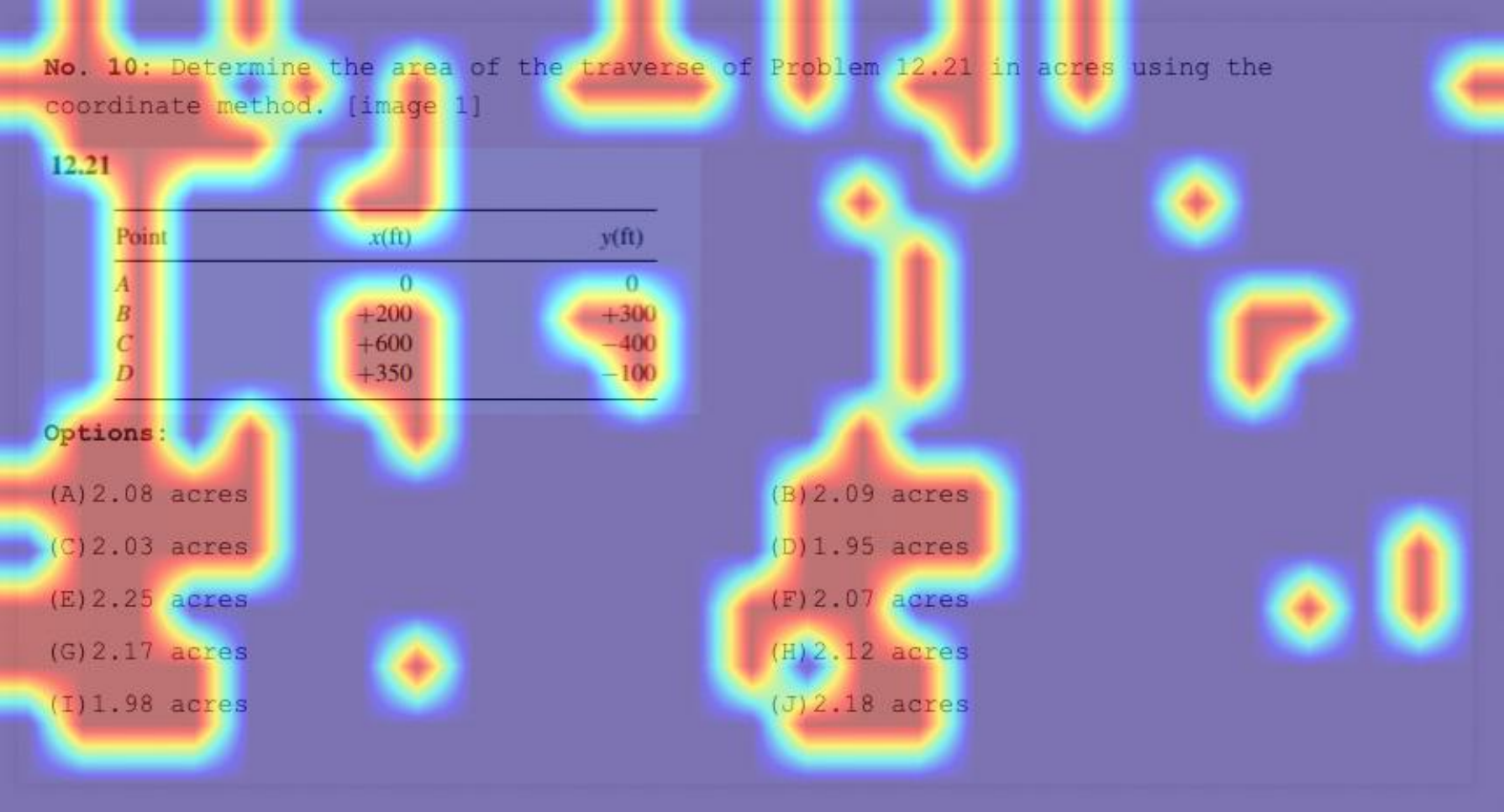}
        \caption{Last generation step}
        \label{fig:visual_token_visualization_mmmu_pro_test_61:last_step}
    \end{subfigure}

    \vspace{1em}
    
    \begin{subfigure}[b]{\textwidth}
        \centering
        \fbox{\parbox{0.95\textwidth}{\small
        let's solve this problem using the coordinate method... Area = (1/2)|sum from 1 to n of (xi*yi+1 - xi+1*yi)|... The problem gives points A, B, C, D. Let's confirm the order: A(0,0), B(200,300), C(600,-400), D(350,-100)... Area = (1/2)*| -180,000 | = 90,000 square feet... Now, we need to convert square feet to acres. We know that 1 acre = 43,560 square feet. So, Area in acres = 90,000 / 43,560 $\approx$ 2.066... So 2.066 is approximately 2.07, which is option F.
        }}
        \caption{Model reasoning}
        \label{fig:visual_token_visualization_mmmu_pro_test_61:reasoning}
    \end{subfigure}
    \caption{Visualization of retained visual tokens for a sample in the MMMU-Vision-Pro dataset (Layer 3, Head 3). The final generation step (b) demonstrates DBTrimKV isolating the specific textual coordinate data points (e.g., $A(0,0)$, $B(200,300)$) required by the model to formulate the area computation in its reasoning output (c). The correct answer is F.}
    \label{fig:visual_token_visualization_mmmu_pro_test_61}
\end{figure*}

\section{Limitations and Future works}

\label{appendix:section:limitations}

While our proposed DBTrimKV demonstrates strong performance, there are several limitations and avenues for future exploration:
\begin{itemize}[leftmargin=2em, itemsep=0pt, topsep=0pt, parsep=1pt, partopsep=1pt]
    \item In our current methodology, we only train the lightweight retention gates and freeze the base language model weights. While this approach is computationally efficient and preserves the fundamental capabilities of the base model, the impact of jointly fine-tuning the base LLM alongside the retention network remains unknown. Future work should investigate whether end-to-end joint training allows the model to inherently structure its internal representations to be more amenable to KV cache eviction.
    \item Our empirical evaluations were conducted on models up to the 8B parameter scale. We did not study the efficacy of globally calibrated retention gates on significantly larger, frontier-scale models. Establishing scaling laws for token retention and evaluating whether the mitigation of attention dilution behaves similarly at the 70B+ parameter scale is a critical next step.
    \item We observe that DBTrimKV can surpass full-cache inference performance at specific low-budget regimes by effectively suppressing distractor tokens. This indicates that token eviction can be utilized not just for compression, but as a mechanism to actively improve reasoning. Future work could explore using reinforcement learning as a post-training step, treating token retention as an action space to actively explore and optimize eviction policies for downstream performance.
\end{itemize}

\end{document}